\newcommand{\titt}{Improved SVRG for quadratic functions}
\newcommand{\commentt}[2]{#2}
\newcommand{\comment}[1]{}
\newcommand{\cov}{{\rm rCov}}
\newcommand{\var}{{\rm Var}}
\newcommand{\tr}{{\rm tr}}
\newcommand{\citet}{\citeasnoun}
\title{\titt}
\author{Nabil Kahal\'e
\thanks{\emph{ESCP Business School, 75011 Paris,
France; {e-mail: }{nkahale@escp.eu. ORCID: 0000-0002-4009-6815.} }}}
\date{\today}
\begin{document}

\newtheorem{example}{Example}[section]
\newtheorem{theorem}{Theorem}[section]
\newtheorem{conjecture}{Conjecture}[section]
\newtheorem{lemma}{Lemma}[section]
\newtheorem{proposition}{Proposition}[section]
\newtheorem{remark}{Remark}[section]
\newtheorem{corollary}{Corollary}[section]
\newtheorem{definition}{Definition}[section]
\numberwithin{equation}{section}
\maketitle\newcommand{\ABSTRACT}[1]{\begin{abstract}#1\end{abstract}}
\newcommand{\citep}{\cite}

\ABSTRACT{
We analyse an iterative algorithm to minimize quadratic functions whose Hessian matrix $H$ is the expectation of a random symmetric  $d\times d$ matrix. The algorithm is a variant of  the stochastic variance reduced gradient (SVRG). In several applications, including least-squares regressions, ridge regressions, linear discriminant analysis and regularized linear discriminant analysis, the running time of each iteration is proportional to $d$. Under smoothness and convexity conditions, the algorithm has linear convergence. When applied to quadratic functions, our analysis improves the state-of-the-art performance of SVRG up to a logarithmic factor. Furthermore, for well-conditioned quadratic problems, our analysis improves the state-of-the-art running times of accelerated SVRG,  and is better than the known  matching lower bound, by a logarithmic factor. Our theoretical results are backed with numerical experiments.
}
Keywords: least-squares regression, ridge regression, linear discriminant analysis, stochastic gradient descent, Hessian matrix.
\section{Introduction}
The recent availability of massive volumes of data fosters the need to design   computationally efficient algorithms for optimization in high dimensions. In large-scale machine learning, stochastic gradient descent (SGD) algorithms  are among the most effective optimization methods~\cite{bottou2018optimization}.  For strongly-convex functions, i.e., when the smallest eigenvalue of the Hessian matrix is bounded away from \(0\), averaged SGD achieves the
rate of convergence of \(O(1/k)\) after \(k\) iterations~\cite{nemirovski2009robust}.    The stochastic average gradient method (SAG) of \citet{Schmidt2012stochastic} optimizes
the sum of  \(n\)  convex functions with a linear convergence rate   (i.e., a rate that decreases exponentially with the number of iterations)   in the strongly-convex case. Alternative variance-reduced SGD algorithms include stochastic dual coordinate ascent (SDCA) \cite{ZhangShalev2013stochastic} and stochastic
variance reduced gradient (SVRG) \cite{Zhang2013SVRG}. SGD and variance-reduced SGD algorithms typically minimize a real-valued function \(f\) of the form  \begin{equation}
f(\theta)=\frac{1}{n}\sum^{n}_{i=1}f_{i}(\theta),
\end{equation}where \(\theta\) is a \(d\)-dimensional column vector and each \(f_{i}\) is convex and \(L_{i}\)-smooth. In least-squares regressions, for instance, \(f_{i}(\theta)=(x_{i}\theta-y_{i})^{2}/2\), where \(x_{i}\) is the \(i\)-th row of an \(n\times d\) matrix   \(X\), and \(y_{i}\) is the \(i\)-th coordinate of an   \(n\)-dimensional column vector \(Y\). Using similar notation, for ridge regressions,  \(f_{i}(\theta)=(x_{i}\theta-y_{i})^{2}/2+ {\lambda}||\theta||^{2}/2\), where \(\lambda\) is a positive constant. We will concentrate on the case where \(f\) is \(\mu\)-convex. To approximately minimize \(f\), SGD uses the recursion 
\begin{equation}\label{eq:SGDGen}
\theta_{k+1}:=\theta_{k}-\alpha f'_{i}(\theta_{k}),
\end{equation} \(k\geq0\), where \(\alpha\) is a suitable step-size and \(i\) is chosen uniformly at random in \(\{1,\dots,n\}\). Basic variance-reduced SGD algorithms replace \( f'_{i}\) in~\eqref{eq:SGDGen} with a variance-reduced stochastic gradient. For instance, the SVRG algorithm uses several epochs. At the beginning of each epoch, a full gradient is calculated and is used to generate variance-reduced stochastic gradients   throughout the epoch. Basic variance-reduced algorithms typically achieve precision \(\epsilon\) in \(O((n+\kappa_{\max})\log (1/\epsilon))\) stochastic gradient computations, where \(\kappa_{\max}:=\max_{i}(L_{i}/\mu)\). \citet{xiao2014proximal} show that a version of  SVRG with non-uniform sampling achieves precision \(\epsilon\) in \(O((n+\kappa_{\text{avg}})\log (1/\epsilon))\) stochastic gradient computations, where \(\kappa_{\text{avg}}:=(1/n)\sum^{n}_{i=1}L_{i}/\mu\).  Variants of  SVRG  are analysed in \cite{Lacoste-JulienNIPS2015,allen2016improved,lei2017less,BachSVRG2019,kulunchakov2020estimate}. Recently, \citet{kovalev2020} developed L-SVRG, a loopless version of SVRG that does not require the knowledge of the  condition number and achieves precision \(\epsilon\) in \(O((n+\kappa_{\max})\log (1/\epsilon))\) stochastic gradient computations. Accelerated variance-reduced SGD methods are analysed in~\cite{ZhangShalev2014accelerated,nitandaAccelerationNIPS2014,allen2018katyusha,LanZhouAccel2018,kovalev2020}. In particular, \citet{allen2018katyusha} provides an accelerated extension of SVRG that achieves precision \(\epsilon\)   in \(O((n+\sqrt{n\kappa_{\max}})\log (1/\epsilon))\)  stochastic gradient computations. This matches the lower bound given by~\citet{LanZhouAccel2018}.
\citet{gower2020variance} provide a recent review on variance-reduced optimization algorithms for machine learning.

Several optimization problems that arise in scientific computing and data analysis involve the minimization of quadratic functions of the form 
\begin{equation}
\label{eq:QuadraticMinimization}f(\theta)=\frac{1}{2}\theta^{T}H\theta-c^{T}\theta,
\end{equation}
where \(H\) is a   positive definite \(d\times d\) matrix, \(c\) is a   \(d\)-dimensional column vector, and \(\theta\) ranges over all \(d\)-dimensional column vectors. This paper assumes  that   \(f\) is \(\mu\)-convex with \(\mu>0\), and that \(H=E(Q)\), where \(Q\) is  a random symmetric   \(d\times d\) matrix with \(E(Q^{2})\leq LH\), where \(L>0\) is a known constant. The  latter condition holds if the eigenvalues of \(Q\) are between \(0\) and \(L\). As shown in Section~\ref{se:examples}, this framework is suitable to several applications  including least-squares regressions, ridge regressions,   linear discriminant analysis and regularized linear discriminant analysis.  These methods are   widely used for inference and prediction\comment{ (e.g. \cite{bartram2018agnostic}}, and many of the modern  machine learning techniques such as the logistic regression, the lasso method and neural networks can be considered as extensions of these techniques. As \(f'(\theta)=H\theta-c\), a stochastic gradient of \(f\) is \(Q\theta-c\). We introduce Q-SVRG, a variant of the SVRG algorithm adapted to quadratic functions, to minimize
 \(f\) at an arbitrary precision.  The algorithm   Q-SVRG does not require the explicit computation of \(H\). It is similar to  the variant of SVRG developed by \citet{xiao2014proximal}, and consists of  \(l\) epochs each comprising  \(m\) inner iterations. A full gradient is calculated at the beginning of every epoch, and each inner iteration calculates a stochastic gradient. In the aforementioned applications,  the calculation of the full gradient takes \(O(nd)\) time, and the  running time of each inner iteration of Q-SVRG is  \(O(d)\).  Non-uniform sampling is handled by    Q-SVRG in a straightforward manner by choosing a non-uniform distribution for \(Q\). The theoretical design and analysis of    SVRG in \cite{xiao2014proximal} and of Q-SVRG is based on the average of iterates, and so do our numerical experiments on Q-SVRG.  \citet{xiao2014proximal} use however the last iterate in their numerical experiments on SVRG.   Q-SVRG   enjoys the following properties:
\begin{enumerate}
\item For any fixed  \(l\ge1\),   Q-SVRG achieves a
rate of convergence  of  \(O((\kappa /m)^{l})\) with  \(l\) epochs comprising  \(m\) inner iterations each,  where \(\kappa:=L/\mu\) is the condition number, and can be simulated  without the knowledge of \(\mu\). In contrast, for any fixed number of epochs,
the analysis in \cite{xiao2014proximal} implies a constant optimality gap when the number of inner iterations goes to infinity. For  least-squares and ridge regressions with constant \(\kappa\), Q-SVRG achieves precision \(\epsilon=O(n^{-l})\) in \(O(nd)\) total time. We are not aware of any previous result  showing such time-accuracy tradeoff. \item When  \(\mu\) is known and the calculation of a full gradient of \(f\) requires \(n\) stochastic gradients,  Q-SVRG has linear convergence and achieves precision \(\epsilon\) in \begin{displaymath}
O(\frac{n+\kappa}{\max(1,\log ( n/\kappa))}\log \frac{1}{\epsilon})
\end{displaymath}
stochastic gradient computations. This improves the state-of-the-art performance of SVRG~\cite{xiao2014proximal} up to a logarithmic factor. When \(\kappa\) is constant, the gradient-complexity of Q-SVRG outperforms that of  SVRG in \cite{xiao2014proximal}, accelerated SVRG in \cite{allen2018katyusha}, and is better than the aforementioned  lower bound of~\citet{LanZhouAccel2018},   by a factor of order \(\log (n)\).  This lower bound was established for a quadratic objective function that falls into the \eqref{eq:QuadraticMinimization} framework and  a class of randomized  gradient methods which
sequentially calculate the gradient of a random component function. This class does not include, stricto-sensu,    Q-SVRG, which acquires  the full gradient at the beginning of every epoch. Thus the \citet{LanZhouAccel2018} complexity lower bound does not apply to  Q-SVRG.\item  The theoretical analysis of Q-SVRG allows a step-size of length up to \(1/L\). This is larger by a multiplicative constant than step-sizes given by previous theoretical analysis  of  variance-reduced algorithms such as SAG~\cite{BachSchmidt2017minimizing}, SVRG~\cite{Zhang2013SVRG,xiao2014proximal} or L-SVRG~\cite{kovalev2020}.
\end{enumerate}
\comment{Improving running times by logarithmic factors has been the source of  major studies in the variance-reduced SGD literature (see, e.g., \cite{allen2018katyusha}). }
\subsection{Other related work}
An exact solution to the least-squares regression problem can be found in \(O(nd^{2})\) time \cite{golub2013matrix}. \citet{rokhlin2008fast} describe a randomized algorithm based on a preconditioning matrix that minimizes least-squares regressions with  relative precision \(\epsilon\) in \(O(d^{3}+nd\log (d/\epsilon))\) time, for \(\epsilon>0\).     For non-strongly-convex linear regressions, \citet{bachMoulines2013non} show   a convergence rate
   of \(O(1/k)\) after \(k\) iterations for an averaged SGD
algorithm with constant step-size in an on-line setting. 
\citet{Richtarik2015randomized} describe a randomized iterative
method with a linear rate for solving linear systems, that is also applicable to least squares regressions. However, when applied to least squares,  a naive implementation of their iterative step requires  $O(nd)$ time. This  can be reduced to \(O(d)\)  time if \(X^{T}X\) is precomputed, which takes \(O(nd^{2})\) time.
 \citet{pilanci2016iterative} provide algorithms  for   constrained least-squares through a random projection on a lower dimensional space. They show how to minimize least-squares regressions
with relative precision \(\epsilon\) in \(O((nd\log (d)+d^{3})\log (1/\epsilon))\) time.   \citet{BachLeastSquaresJMLR2017} study an averaged accelerated regularized
SGD algorithm for least-squares regressions. Mini-batching and tail-averaging SGD algorithms for least-squares regressions are analyzed by \citet{jain2018parallelizing}. Our updating rule is similar to the recursion used by 
\citet{kahale2019Gaussian}  to approximately simulate high-dimensional Gaussian vectors with a given covariance matrix.
 \citet{loizou2020momentum}   provide stochastic algorithms with linear rates to minimize  the expectation of a random quadratic function. In their framework, however,      stochastic gradients vanish at the optimum, which is not the case in our setting.       
The remainder of the paper is organized as follows. Section~\ref{se:AlgorithmDescription} describes Q-SVRG and its properties. Section~\ref{se:examples} describes applications of Q-SVRG. Section~\ref{se:numer} gives numerical experiments. Section~\ref{se:conclusion} contains concluding remarks. Omitted proofs are in \commentt{the Supplementary Material}{the supplementary material}. The running time refers to the number of arithmetic operations.

\comment{ \citet{frostig2015regularizing} give an algorithm that minimizes \(g(\theta)\) in time \(O(dn\sqrt{\kappa}\log (\epsilon_{0}/\epsilon))\), where \(\kappa=\lceil R^{2}/\lambda_{\min}(X^{T}X)\rceil\) is the condition number, \(R\) is the largest Euclidean norm of a row of \(X\), and \(\epsilon_0/\epsilon\) is the ratio between the initial and desired accuracy. } 
\section{The  algorithm description and properties}\label{se:AlgorithmDescription}
Let \(I\) denote the \(d\times d\) identity matrix.
This section makes the following assumptions.\begin{description}
\item[Assumption 1 (A1).] \(H\geq \mu I\), where \(\mu\) is a positive constant.
\item[Assumption 2 (A2).] 
There is a random sequence \((Q_{k}:k\ge0)\) of independent  symmetric  \(d\times d\) matrices such that  \(E(Q_{k})=H\) and \(E({Q_{k}}^{2})\le LH\).
\end{description} 
Define the condition number \(\kappa:=L/\mu\). Let  \(t_{H}\) (resp. \(t_{Q}\)) be the time needed to calculate  \(H\theta\)    (resp. \(Q_{k}\theta\)) for a given  \(d\)-dimensional vector \(\theta\). Assumption A1 implies that \(H\) is invertible and that \(\mu\) is smaller than the smallest eigenvalue of \(H\).  A2 implies that \(H\leq LI\). Conversely,  if A1 holds and  \(H\leq LI\), then A2 trivially holds by choosing \(Q_{k}=H\) for \(k\ge0\). Our applications, though, use matrices \(Q_{k}\) such that \(t_{H}\) is much larger than \(t_{Q}\), which is typically of order \(d\).

Given an initial  \(d\)-dimensional column vector \(\theta_{0}\) and a real number \(\alpha\in(0,1/L]\), define the sequence of  \(d\)-dimensional column vectors
\((\theta_k:k\ge0)\) via the recursion
\begin{equation}\label{eq:BasicDefthetaGen}
\theta_{k+1}:=\theta_{k}-\alpha(Q_{k}(\theta_{k}-\theta_{0})-c+H\theta_{0}),
\end{equation}
for \(k\ge0\). As \(f'(\theta)=H\theta-c\) and, by A2, \begin{displaymath}
E(Q_{k}(\theta_{k}-\theta_{0})-c+H\theta_{0}|\theta_{k})=H\theta_{k}-c,
\end{displaymath}
\eqref{eq:BasicDefthetaGen} can be viewed as a variant of SGD. Since \(Q_{k}(\theta_{k}-\theta_{0})\) is equal to the difference between the stochastic gradients \(Q_{k}\theta_{k}-c\) and \(Q_{k}\theta_{0}-c\),  \eqref{eq:BasicDefthetaGen} can be considered as a variance-reduced SGD, and is essentially the same recursion used in the inner iteration of SVRG~\cite{Zhang2013SVRG}.
 For \(k\geq1\), let \begin{equation*}
\bar\theta_{k}:=\frac{\theta_{0}+\cdots+\theta_{k-1}}{k}.
\end{equation*}
Thus, the calculation of  \(\bar\theta_{k}\) takes \(O(t_{H}+kd +kt_{Q})\) time and involves one full gradient and \(k\) stochastic gradient computations (following the literature convention on variance-reduced SGD methods applied to quadratic problems, we consider that calculating \(Q_{k}(\theta_{k}-\theta_{0})\) involves one, rather than two, stochastic gradient computations).

Given \(m\ge1\), let \(T_{m}\) be the random operator that maps any \(d\)-dimensional column vector  \(\theta_{0}\) to \(\bar\theta_{m}\). For \(l\ge1\), denote by  \(T^{l}_{m}\)    the random operator on the set of  \(d\)-dimensional column vectors obtained by composing \(l\) times the operator  \(T_{m}\). Thus, calculating    
 \(T^{l}_{m}(\theta)\)  from \(\theta\) takes  \(O(l(t_{H}+m d +m t_{Q}))\) time and involves \(l\) full gradient and \(l m\) stochastic gradient computations. Algorithm~\ref{alg:RHAlk} gives a pseudo-code that outputs \(T^{l}_{m}(0)\).
\begin{algorithm}[H]
\caption{Procedure Q-SVRG}
\label{alg:RHAlk}
\begin{algorithmic}
\Procedure{Q-SVRG}{$\alpha,m,l$}
\State\(\theta_{0}\gets0\) 
\For{$h\gets 1,l$}
\State \(\tilde c\gets c-H\theta_{0}\)
\For{$k\gets0,m-1$}    
\State $\theta_{k+1}=\theta_{k}-\alpha(Q_{k}(\theta_{k}-\theta_{0})-\tilde c)$
\EndFor
\State  
$\theta_{0}\gets(\theta_{0}+\cdots+\theta_{m-1})/{m}$
\EndFor
\State \Return{$\theta_{0}$} 
\EndProcedure
\end{algorithmic}
\end{algorithm}
\comment{
\end{minipage}
\hfill
\begin{minipage}[t]{0.46\textwidth}
\begin{algorithm}[H]
\caption{Procedure Q-SVRG for least squares regression}
\label{alg:RHAlkLeastSquares}
\begin{algorithmic}
\Procedure{Q-SVRG}{$\alpha,m,l$}
\For{$i\gets1,n$}    
\State\(p_{i}\gets {||X^{T}e_{i}||^{2}}/{\tr(X^{T}X)}\) 
\EndFor
\State\(\theta_{0}\gets0\) 
\For{$h\gets 1,l$}
\State \(\tilde c\gets\tr(X^{T}X)^{-1}X^{T}(y-X\theta_{0})\)
\For{$k\gets0,m-1$}    
\State Sample  \(i(k)\) from \(\{1,\dots,n\}\) such that $\Pr(i(k)=j)=p_{j}$ for \(1\leq j\leq n\)  
\State  \(u_{k}\gets||X^{T}e_{i(k)}||^{-1}(X^{T}e_{i(k)})\)    
\State $\theta_{k+1}=\theta_{k}-\alpha (u_{k}(u_{k}^{T}(\theta_{k}-\theta_{0}))-\tilde c)$
\EndFor
\State  
$\theta_{0}\gets(\theta_{0}+\cdots+\theta_{m-1})/{m}$
\EndFor
\State \Return{$\theta_{0}$} 
\EndProcedure
\end{algorithmic}
\end{algorithm}
\end{minipage}
}
\subsection{The worst-case analysis}\label{sub:analysis}
As \(H\) is invertible, there is a unique   \(d\)-dimensional column vector \(\theta^{*}\) such that
\begin{equation}\label{eq:Htheta*}
H\theta^{*}=c.
\end{equation}
By a standard calculation,  for any    \(d\)-dimensional column vector \(\theta\), 
\begin{equation}\label{eq:minimumf}
f(\theta)-f(\theta^{*})=\frac{1}{2}(\theta-\theta^{*})H(\theta-\theta^{*}),
\end{equation} 
and so \(f\)  attains its minimum  at \(\theta^{*}\). Theorem~\ref{th:mainGen}  analyses the convergence properties of Q-SVRG.
\begin{theorem}\label{th:mainGen}Assume that A1 and A2 hold. Let \(\theta^{*}\) be the unique \(d\)-dimensional vector satisfying \eqref{eq:Htheta*}. Then, for any  \(\theta\in\mathbb{R}^{d}\),  \(\alpha\in(0,1/L]\), \(l\ge1\), and  \(m\ge1\),
\begin{equation}\label{eq:thMainPositiveDefinite}
E(f(T^{l}_{m}(\theta)))-f(\theta^{*})\leq(\frac{9}{\alpha\mu m} )^{l}(f(\theta)-f(\theta^{*})).
\end{equation}
\end{theorem}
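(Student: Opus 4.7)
The plan is to first reduce the statement to the single-epoch case \(l=1\). Because A2 supplies an infinite sequence of \emph{independent} matrices \(Q_k\), conditioning on the output of the first \(l-1\) epochs leaves the \(l\)-th epoch independent of what came before; so, applying the one-epoch bound at each conditional level and using the tower property iterates the contraction factor \(9/(\alpha\mu m)\) exactly \(l\) times, giving \eqref{eq:thMainPositiveDefinite}.

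For the case \(l=1\), I set \(e_k:=\theta_k-\theta^*\) and \(u_k:=\theta_k-\theta_0=e_k-e_0\). Because \(c=H\theta^*\), the recursion \eqref{eq:BasicDefthetaGen} rewrites as \(e_{k+1}=(I-\alpha H)e_k-\alpha\xi_k\), where the variance-reduction noise \(\xi_k:=(Q_k-H)u_k\) is a martingale difference with respect to \(\mathcal F_k:=\sigma(Q_0,\ldots,Q_{k-1})\), since \(u_k\) is \(\mathcal F_k\)-measurable and \(E(Q_k)=H\). Summing the recursion and using \(\bar\theta_m-\theta^*=(1/m)\sum_{k<m}e_k\) produces the telescoping identity
\[
mH(\bar\theta_m-\theta^*)=\frac{1}{\alpha}(e_0-e_m)-\sum_{k=0}^{m-1}\xi_k.
\]
Since \eqref{eq:minimumf} gives \(2(f(\bar\theta_m)-f(\theta^*))=\|H(\bar\theta_m-\theta^*)\|_{H^{-1}}^2\), Young's inequality and the orthogonality of the \(\xi_k\)'s in the \(H^{-1}\)-inner product yield
\[
2m^2\bigl(E(f(\bar\theta_m))-f(\theta^*)\bigr)\le\frac{2}{\alpha^2}E\|e_0-e_m\|_{H^{-1}}^2+2\sum_{k=0}^{m-1}E\|\xi_k\|_{H^{-1}}^2.
\]

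Each summand is then controlled through two algebraic lemmas. For the noise term, \(H^{-1}\preceq\mu^{-1}I\) together with A2 gives \(v^TQ_kH^{-1}Q_kv\le\mu^{-1}v^TQ_k^2v\), so \(E(Q_kH^{-1}Q_k)\preceq\kappa H\); hence \(E((Q_k-H)H^{-1}(Q_k-H))\preceq(\kappa-1)H\), giving \(E\|\xi_k\|_{H^{-1}}^2\le(\kappa-1)E\|u_k\|_H^2\). For the first term, the spectral bound \(\|v\|_{H^{-1}}^2\le\mu^{-1}\|v\|^2\) reduces the task to bounding \(E\|e_m\|^2\). Both bounds are then closed through a Lyapunov inequality on the Euclidean norm: the step-size condition \(\alpha\le 1/L\) makes \(E((I-\alpha Q_k)^2)\preceq I-2\alpha H+\alpha^2 LH\preceq I-\alpha H\), which combined with \(E(\|\xi_k\|^2\mid\mathcal F_k)\le L\|u_k\|_H^2\) gives
\[
E\|e_{k+1}\|^2\le E\|e_k\|^2-\alpha E\|e_k\|_H^2+\alpha^2LE\|u_k\|_H^2.
\]
Summing this inequality from \(k=0\) to \(m-1\) and using \(\|u_k\|_H^2\le 2\|e_k\|_H^2+2\|e_0\|_H^2\) produces a small linear system in \(\sum_k E\|e_k\|_H^2\) and \(E\|e_m\|^2\) that can be solved explicitly, converting everything back into a multiple of \(\|e_0\|_H^2=2(f(\theta)-f(\theta^*))\).

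The main obstacle is that the variance bound and the Lyapunov bound feed each other: \(\sum_k E\|\xi_k\|_{H^{-1}}^2\) is controlled by \(\sum_k E\|u_k\|_H^2\), which through the Lyapunov inequality depends back on \(\sum_k E\|e_k\|_H^2\) and on \(E\|e_m\|^2\), which in turn re-enters the first term through \(e_0-e_m=-u_m\). Keeping the constant as small as \(9\) across the full range \(\alpha\in(0,1/L]\) forces all the AM-GM splits to be made with carefully balanced weights; at the boundary \(\alpha=1/L\) the one-step contraction \(E((I-\alpha Q_k)^2)\preceq I-\alpha H\) is only marginal, so the cross terms must be absorbed exactly, not with slack, to avoid an extra \(\kappa\) factor. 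Once the bookkeeping is carried out, the deterministic part contributes \(O(1/(\alpha\mu m)^2)\) and the variance part contributes \(O(\kappa/m)=O(1/(\alpha\mu m))\) (using \(\kappa\le 1/(\alpha\mu)\) from \(\alpha\le 1/L\)), so the variance term dominates and produces exactly the rate stated in the theorem.
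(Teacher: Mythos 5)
Your overall architecture is sound and genuinely different from the paper's. The reduction to $l=1$, the rewriting $e_{k+1}=(I-\alpha H)e_k-\alpha\xi_k$ with $\xi_k=(Q_k-H)u_k$ a martingale difference, and the telescoped identity $mH(\bar\theta_m-\theta^{*})=\alpha^{-1}(e_0-e_m)-\sum_k\xi_k$ are all correct. The paper instead splits $E\bigl((\bar\theta_m-\theta^{*})^{T}H(\bar\theta_m-\theta^{*})\bigr)$ exactly into a squared-bias term plus $\cov(H\bar\theta_m,\bar\theta_m)$ and telescopes the covariances $\cov(H\theta_i,\theta_j)$ via $\sum_{i}(I-\alpha H)^{i}H=\alpha^{-1}\bigl(I-(I-\alpha H)^{j+1}\bigr)$; your route telescopes the iterates themselves and pays a factor $2$ through Young. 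That difference is mostly cosmetic. The divergence that matters is how the second moments of the iterates are closed, and that is where your argument has a genuine gap.

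The step that fails is the claim that the Lyapunov inequality $E\|e_{k+1}\|^{2}\le E\|e_k\|^{2}-\alpha E\|e_k\|_{H}^{2}+\alpha^{2}LE\|u_k\|_{H}^{2}$, combined with $\|u_k\|_{H}^{2}\le 2\|e_k\|_{H}^{2}+2\|e_0\|_{H}^{2}$, ``produces a small linear system that can be solved explicitly.'' Summing over $k$ gives $E\|e_m\|^{2}-\|e_0\|^{2}\le(2\alpha^{2}L-\alpha)\sum_{k}E\|e_k\|_{H}^{2}+2\alpha^{2}Lm\|e_0\|_{H}^{2}$, and the coefficient $2\alpha^{2}L-\alpha$ is nonnegative once $\alpha\ge 1/(2L)$, so for $\alpha\in(1/(2L),1/L]$ the system yields no bound at all on $\sum_{k}E\|e_k\|_{H}^{2}$ or on $E\|e_m\|^{2}$. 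This is precisely the self-referential obstruction that forces step sizes below $1/L$ by a constant factor in standard SVRG analyses; your remark that at $\alpha=1/L$ ``the cross terms must be absorbed exactly'' names the difficulty without supplying the idea that resolves it. Relatedly, your claim that the deterministic part contributes $O(1/(\alpha\mu m)^{2})$ presupposes a bound $E\|e_m\|^{2}=O(\|e_0\|^{2})$ uniform in $m$, which the proposed machinery does not deliver. The paper's device for the full range $\alpha\in(0,1/L]$ is the coupled auxiliary sequence $\beta_0=\theta^{*}$, $\beta_{k+1}=P_k\beta_k+\alpha c$: its noise $\alpha(H-Q_k)\theta^{*}$ has second moment at most $\alpha^{2}{\theta^{*}}^{T}H\theta^{*}$ \emph{independently of the iterates}, so $E\|\beta_k-\theta^{*}\|^{2}\le(\alpha/\mu){\theta^{*}}^{T}H\theta^{*}$ closes by a genuine $(1-\alpha\mu)$ contraction, while $E\|\theta_k-\beta_k\|^{2}\le\|\theta^{*}\|^{2}$ follows from $E(P_k^{2})\le I$; together these bound $\var(\theta_k)$ with no feedback loop. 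You would need this, or some equivalent non-self-referential bound on $\sum_k E\|u_k\|_{H}^{2}$ and $E\|e_m\|^{2}$ (for instance, a Lyapunov argument run on $u_k$ rather than $e_k$), to cover step sizes up to $1/L$ — and even then, obtaining the constant $9$ rather than something larger would require further work.
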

The proof of Theorem~\ref{th:mainGen} follows by  induction on \(l\) and bounding separately  bias and variance terms. 
As \eqref{eq:minimumf} implies that \begin{equation}
||\theta-\theta^{*}||^{2}\leq\frac{2}{\mu}(f(\theta)-f(\theta^{*})),
\end{equation} it follows from Theorem~\ref{th:mainGen} that 
\begin{equation*}
E(||T^{l}_{m}(\theta_{0})-\theta^{*}||^{2})\leq\frac{2}{\mu}(\frac{9}{\alpha\mu m} )^{l}(f(\theta_{0})-f(\theta^{*})).
\end{equation*}  When A1 and A2 hold and   \(\alpha=1/L\), Theorem~\ref{th:mainGen} implies that   \(E(f(T^{l}_{m}(\theta_{0})))-f(\theta^{*})=O((\kappa/m)^{l})\) as \(m\) goes to infinity, for any fixed \(l\ge1\).  Observe that  \(T^{l}_{m}(\theta)\)  can be simulated without the explicit knowledge of \(\mu\). 

Assume now that   \(\mu\) is known and that one full gradient is a weighted sum of   \(n\) stochastic gradients, with known weights. The latter condition implies that each epoch with \(m\) inner iterations involves \(n+m\) stochastic gradient calculations. This condition holds in several applications with \(n\) data points (see Section~\ref{se:examples}). Set \(\alpha=1/L\).  By Theorem~\ref{th:mainGen}, Q-SVRG  minimizes \(f\) with expected error \(\epsilon\) with \(l\) epochs, each containing  \(m\) inner iterations,  where  \(m=9\max(e\kappa,n)\) and \begin{displaymath}
l=\frac{1}{\max(1,\log ( n/\kappa))}\log\frac{f(\theta_{0})-f(\theta^{*})}{\epsilon}.
\end{displaymath}  
Thus, the total number of stochastic gradient computations required by  Q-SVRG   to minimize \(f\) with expected error \(\epsilon\) is\begin{eqnarray*}
N_{\epsilon}&\le&l(n+m)\\&=&\frac{n+\kappa}{\max(1,\log ( n/\kappa))}\log\frac{f(\theta_{0})-f(\theta^{*})}{\epsilon}.
\end{eqnarray*}  This improves  the  \(O((n+\kappa)\log(1/\epsilon))\)  gradient-complexity of SVRG~\cite{xiao2014proximal} up to a logarithmic factor. In particular, when \(\kappa\) is constant, \begin{displaymath}
N_{\epsilon}=O(\frac{n}{\log (n)}\log\frac{1}{\epsilon}),
\end{displaymath}which  is better than the gradient-complexity of  SVRG in \cite{xiao2014proximal}, accelerated SVRG in \cite{allen2018katyusha}, and  the aforementioned  lower bound of~\citet{LanZhouAccel2018},   by a factor of order \(\log (n)\).

\comment{Similarly, in applications involving \(n\) data points in dimension \(d\) with \(t_{H}=O(nd)\) and \(t_{Q}=O(d)\), (see Section~\ref{se:examples}), setting \(\alpha=1/L\) and \(m=\max(9e\kappa,n)\) implies that $$\tau_{\epsilon}=O(\frac{(n+\kappa)d}{\max(1,\log ( n/(9\kappa)))}\log\frac{f(\theta_{0})-f(\theta^{*})}{\epsilon}).$$}
\section{Examples}\label{se:examples}
This section gives examples where A1 and A2 hold. 
\subsection{Least-squares regression}\label{sub:leastSquares}
Given an  \(n\times d\) matrix   \(X\) with rank \(d\)    and an   \(n\)-dimensional column vector \(Y\), the least-squares regression consists of  minimizing the function \(g(\theta):=(2n)^{-1}||X\theta-Y||^{2}\), where \(\theta\) ranges over all \(d\)-dimensional column vectors. This problem can be reduced to~\eqref{eq:QuadraticMinimization} by setting  \(H:=\tr(X^{T}X)^{-1}X^{T}X\) and  \(c:=\tr(X^{T}X)^{-1}X^{T}Y\), which implies that \(g(\theta)=\bar L f(\theta)+g(0)\),  where \(\bar L:=\tr(X^{T}X)/n\) is the average squared norm of a line of \(X\).  As \(H\) and \(X\) have the same rank, \(H\) is invertible and A1 holds with \(\mu\) being the smallest eigenvalue of \(H\). For \(1\leq i\leq n\),  let  \(e_{i}\) be the  \(n\)-dimensional column vector whose \(i\)-th component is \(1\) and remaining components are \(0\), and let
\begin{equation}\label{eq:defPi}
p_{i}=\frac{||X^{T}e_{i}||^{2}}{\tr(X^{T}X)}.
\end{equation}
Note that the numerator in~\eqref{eq:defPi} is the sum of the squared entries of the \(i\)-th line of \(X\), while the denominator is the sum of squared entries of \(X\). Thus the \(p_{i}\)'s sum up to \(1\). Let \((i(k):k\ge0)\) be a sequence of independent integral random variables on \(\{1,\dots,n\}\) such that, for \(1\leq j\leq n\),  
\begin{equation*}\label{eq:ikDef}
\Pr(i(k)=j)=p_{j}.
\end{equation*} 
For \(k\geq0\), let  \(u_{k}:=||X^{T}e_{i(k)}||^{-1}(X^{T}e_{i(k)})\)  and \(Q_{k}:=u_{k}u_{k}^{T}\).
As \(u_{k}\) is a unit vector, the largest eigenvalue of \(Q_{k}\) is equal to \(1\). Furthermore, by the definition of \(u_{k}\),
\begin{eqnarray}\label{eq:EukukT}
E(Q_{k})\nonumber
&=&\sum_{j=1}^n p_{j} ||X^{T}e_{j}||^{-2}(X^{T}e_{j}e_{j}^{T}X)\\
\nonumber
&=&\frac{1}{\tr(X^{T}X)}X^{T}(\sum_{j=1}^n e_{j}e_{j}^{T})X\\
&=&H. 
\end{eqnarray}
The second equation follows from~\eqref{eq:defPi}, and the last one by observing that \(\sum_{j=1}^n e_{j}e_{j}^{T}\) is  the \(n\times n\) identity matrix. Thus A2 holds with \(L=1\).  The recursion~\eqref{eq:BasicDefthetaGen}
becomes
\begin{equation}\label{eq:BasicDeftheta}
\theta_{k+1}=\theta_{k}-\alpha((u_{k}^{T}(\theta_{k}-\theta_{0}))u_{k}-c+H\theta_{0}),
\end{equation} 
 for \(k\ge0\).

As \(X^{T}e_{i}\) is the \(i\)-th column of \(X^{T}\), for \(1\leq i \leq n\),  the total time to calculate \(c\), \(H\theta_{0}\) and the \(p_{i}\)'s is \(O(nd)\).   After an initial preprocessing cost of \(O(n)\), the random variable  \(i(k)\) can be simulated in constant time using the alias method  \cite[Section III.4]{DevroyeSpringer}. Thus   the cost of each iteration in~\eqref{eq:BasicDefthetaGen}  is \(O(d)\). Algorithm~\ref{alg:RHAlkLeastSquares} gives a pseudo-code for the Q-SVRG algorithm applied to least-squares regressions.
As  \(Q_{k}\theta=u_{k}(u_{k}^{T}\theta)\), we have \(t_{Q}=O(d)\). Moreover,  \(t_{H}=O(nd)\). It follows from \eqref{eq:EukukT} that one full gradient is a weighted sum of  \(n\) stochastic gradients.

 The recursion~\eqref{eq:BasicDeftheta} uses a non-uniform sampling scheme with sampling probabilities determined by the squared norm of each row vector. A similar sampling scheme has been applied by~\citet{frieze2004fast} in the context of low-rank approximations of a matrix, by  \citet{strohmer2009randomized} to approximately solve linear systems via an iterative algorithm, and by  \citet{BachDefossezAIStat2015averaged} to design an averaged SGD for least-squares regressions.   However, while the updating rule of  the conventional SGD, of   \citet{strohmer2009randomized} and of  \citet{BachDefossezAIStat2015averaged} uses a single random coordinate of \(Y\), \eqref{eq:BasicDeftheta} uses the vector \(c\) that depends on the entire vector \(Y\).  \citet{strohmer2009randomized} 
establish a linear convergence rate for their method in the strongly-convex case.   \citet{BachDefossezAIStat2015averaged}  give a detailed asymptotic
analysis (as the number of iterations goes to infinity) of their algorithm.
\begin{algorithm}[!t]
\caption{Procedure Q-SVRG for least squares regression}
\label{alg:RHAlkLeastSquares}
\begin{algorithmic}
\Procedure{Q-SVRG}{$\alpha,m,l$}
\For{$i\gets1,n$}    
\State\(p_{i}\gets {||X^{T}e_{i}||^{2}}/{\tr(X^{T}X)}\) 
\EndFor
\State\(\theta_{0}\gets0\) 
\For{$h\gets 1,l$}
\State \(\tilde c\gets\tr(X^{T}X)^{-1}X^{T}(y-X\theta_{0})\)
\For{$k\gets0,m-1$}    
\State Sample  \(i(k)\) from \(\{1,\dots,n\}\) such that $\Pr(i(k)=j)=p_{j}$ for \(1\leq j\leq n\)  
\State  \(u_{k}\gets||X^{T}e_{i(k)}||^{-1}(X^{T}e_{i(k)})\)    
\State $\theta_{k+1}=\theta_{k}-\alpha (u_{k}(u_{k}^{T}(\theta_{k}-\theta_{0}))-\tilde c)$
\EndFor
\State  
$\theta_{0}\gets(\theta_{0}+\cdots+\theta_{m-1})/{m}$
\EndFor
\State \Return{$\theta_{0}$} 
\EndProcedure
\end{algorithmic}
\end{algorithm}

\subsection{Ridge regression}\label{subsub:ridge}
Given a  non-zero \(n\times d\) matrix   \(X\), an   \(n\)-dimensional column vector \(Y\), and \(\lambda>0\), the ridge regression consists of minimizing the function
\begin{displaymath}
g(\theta):=\frac{1}{2n}||X\theta-Y||^{2}+\frac{\lambda}{2}||\theta||^{2},
\end{displaymath}  
where \(\theta\) ranges over all \(d\)-dimensional column vectors. This problem can be reduced to~\eqref{eq:QuadraticMinimization} by setting  \begin{displaymath}
H:=(\lambda +\bar L)^{-1}(\lambda I+n^{-1}X^{T}X)
\end{displaymath} and  \(c:=(\lambda n +\bar Ln)^{-1}X^{T}Y\), where \(\bar L:=\tr(X^{T}X)/n\), which implies that \(g(\theta)=(\lambda +\bar L) f(\theta)+g(0)\). As \(X^{T}X\) is symmetric positive semidefinite, A1 holds with \(\mu=\lambda/(\lambda +\bar L)\). Let \begin{displaymath}
Q_{k}:=(\lambda+\bar L)^{-1}(\lambda I+\bar Lu_{k}u_{k}^{T}),
\end{displaymath}where \(u_{k}\) is defined as in Section~\ref{sub:leastSquares}. Then \(Q_{k}\leq I\) and  \(E(Q_{k})=H\). Thus A2 holds with \(L=1\). An analysis similar to the one in Section~\ref{sub:leastSquares} shows that, after a total preprocessing cost of \(O(nd)\),   the cost of each iteration in~\eqref{eq:BasicDefthetaGen} is \(O(d)\).  Furthermore,  \(t_{Q}=O(d)\) and \(t_{H}=O(nd)\), and one full gradient is a weighted sum of \(n\) stochastic gradients.
Thus the Q-SVRG algorithm with \(\alpha=1\) and \(m=9\max(e(\lambda +\bar L)/\lambda,n)\) has a linear convergence rate.
\subsection{Linear discriminant analysis}
Consider \(d\)-dimensional column vectors \(x_{1},\dots,x_{n}\), where \(x_{i}\) belongs to class \(g(i)\), with \(g(i)\in\{1,\dots,K\}\). For \(1\leq k\leq K\), let \(n_{k}\) be the number of observations in class \(k\), and let \begin{displaymath}
\hat\mu_{k}:=\frac{1}{n_{k}}\sum_{i:g(i)=k}x_{i}
\end{displaymath} be their average. Assume that the \(d\times d\) matrix \begin{displaymath}
\hat \Sigma :=\frac{1}{n-K}\sum^{n}_{i=1}(x_{i}-\hat\mu_{g(i)})(x_{i}-\hat\mu_{g(i)})^{T}
\end{displaymath}
 is invertible. The linear discriminant analysis  method~\cite[Section 4.3]{hastieTibshirani2009elements} classifies a \(d\)-dimensional column vector \(x\) by calculating the linear discriminant functions\begin{displaymath}
\delta_{k}(x)=(x-\frac{1}{2}\hat\mu_{k})^{T}\hat\Sigma^{-1}\hat\mu_{k}+\log(n_{k}/n),
\end{displaymath} 
 \(1\leq k\leq K\). Then \(x\) is assigned to \(\arg\max_{k}\delta_{k}(x)\). Note that \begin{displaymath}
\tr(\hat \Sigma)=\frac{1}{n-K}\sum^{n}_{i=1}(x_{i}-\hat\mu_{g(i)})^{T}(x_{i}-\hat\mu_{g(i)})
\end{displaymath}
can be calculated in \(O(nd)\) time.
Given  \(k\in\{1,\dots,K\}\), let \(H:=\tr(\hat \Sigma)^{-1}\hat \Sigma\) and \(c:=\tr(\hat \Sigma)^{-1}\hat\mu_{k}\). Then\begin{displaymath}
\hat\Sigma^{-1}\hat\mu_{k}=\arg\min f,
\end{displaymath} 
where \(f\) is given by \eqref{eq:QuadraticMinimization}.
As \(\hat \Sigma =X^{T}X\), where \(X\) is the \(n\times d\) matrix whose \(i\)-th line is \((n-K)^{-1/2}(x_{i}-\hat\mu_{g(i)})^{T}\), the function \(f\) can be minimized via~\eqref{eq:BasicDefthetaGen}
using the approach outlined in Section~\ref{sub:leastSquares}. Here again,   \(t_{Q}=O(d)\) and \(t_{H}=O(nd)\), and one full gradient is a weighted sum of  \(n\) stochastic gradients. Regularized linear discriminant analysis can be treated in a similar way.
\section{Numerical experiments}\label{se:numer}
Our numerical experiments were conducted for ridge regressions on the sonar\footnote{http://archive.ics.uci.edu\label{ft:uci}}, madelon\textsuperscript{\ref{ft:uci}} and sido0\footnote{http://www.causality.inf.ethz.ch} binary datasets, whose characteristics are summarized in Table~\ref{tab:datasets}. The variables were centered,  a constant variable was added  to each dataset, and all variables were normalized.  
\begin{table} \caption{datasets used in the simulations}
\center
\begin{tabular}{lrr}\hline
  dataset  & Variables     &Data Points \\ \hline
sonar &$60$ & $ 208$  \\
madelon &$500$ & $2000$  \\
sido0 &$4932$ & $12678$  \\
\hline
 \end{tabular}
 \label{tab:datasets}
\end{table}
  The codes were written in the C++ programming language,
 the compiler used was Microsoft Visual C++ 2013, and the experiments were performed on a laptop PC with an Intel  processor and 8 GB of RAM running Windows 10 Professional. For each dataset, we have implemented  the following methods using the null vector as starting point and  the notation in Section~\ref{subsub:ridge}: \begin{itemize}
\item 
the averaged SGD algorithm with  uniform sampling and step-size  \(1/(4(\lambda+\max_{1\leq i\leq n}||X^{T}e_{i}||^{2}))\), adapted from~\cite{bachMoulines2013non}.
\item the averaged SGD algorithm with non-uniform probabilities adapted from~\cite{BachDefossezAIStat2015averaged}, with step-size  \(1/(\lambda+\bar  L)\).

\item the SAG algorithm with non-uniform probabilities adapted from~\cite{BachSchmidt2017minimizing}, with step-size  \(1/(\lambda+\bar  L)\),  where the lines are sampled according  to the \(p_{i}\)'s, and  the output is the vector among the final iterate and the average of iterates that minimizes \(g\).
\item the SVRG algorithm with non-uniform probabilities. Following the experimental recommendations of \citet{xiao2014proximal}, each epoch comprises  \(2n\) inner iterations, outputs the final iterate,  and the step-size is \(0.1/(\lambda+\bar  L)\).
\item the L-SVRG  method with  uniform sampling as described by  \citet{kovalev2020},  with average epoch-length \(n\) and step-size  \(1/(6(\lambda+\max_{1\leq i\leq n}||X^{T}e_{i}||^{2}))\).
\item the Q-SVRG method that approximately minimizes \(f\) by calculating \(T^{l}_{m}(0)\) via  Algorithm~\ref{alg:RHAlk}, with  \(l=\max(4,N\min(1/n,\lambda/\bar L))\), where \(N\) is the  target total number of inner iterations,  \(m=\lfloor N/l\rfloor\), and  \(\alpha=1\).  Thus, ignoring integrality constraints, \(m=\min(N/4,\max(n,\bar L/\lambda))\). The second argument of the min function is within a constant factor from the value of \(m\) suggested in Section~\ref{subsub:ridge}. The lower bound \(4\) on \(l\) was chosen after running a few computer simulations.  
 \end{itemize}
The results are reported in Figure~\ref{fig:sido0lambda0.01}. The running time is measured by  the number of effective passes, defined as the total number of stochastic gradients divided by \(n\). Each iteration of  the averaged  SGD and SAG methods accounts for one stochastic gradient,  while each epoch of the SVRG, L-SVRG and  Q-SVRG algorithms containing \(m\)  inner iterations accounts for \(n+m\) stochastic gradients. In all our computer experiments, when the number of effective passes is sufficiently large, Q-SVRG outperforms the averaged SGD, SVRG, L-SVRG and SAG methods, expect for the sonar dataset, where Q-SVRG is sometimes outperformed by SAG. 
\begin{figure}
\centering
\begin{subfigure}[b]{0.3\textwidth}
\centering
\begin{tikzpicture}[scale=.45]
\begin{axis}[
    title={},
    ymode=log,
    xlabel={Number of effective passes},
    ylabel={$g(\theta)-g(\theta^*)$},
    legend pos=south west,
    ymajorgrids=true,
    grid style=dashed,
]
 \addplot[
    color=magenta,
    mark=o,
    ]
    coordinates {
( 9,0.0145612)  
( 13,0.00948955)  
( 19,0.00632973)  
( 27,0.00408148)  
( 39,0.00260759)  
( 55,0.00138561)  
    };
 \addplot[
    color=red,
    mark=diamond,
    ]
    coordinates {
( 9,0.00739444)  
( 13,0.00357817)  
( 19,0.0024758)  
( 27,0.00164066)  
( 39,0.00162961)  
( 55,0.00104551)  
    };
 \addplot[
    color=darkgray,
    mark=star,
    ]
    coordinates {
( 9,0.000566802)  
( 13,0.000115369)  
( 19,2.16237e-006)  
( 27,2.61483e-008)  
( 39,6.14091e-010)  
( 55,3.27294e-013)  
    };
 \addplot[
    color=blue,
    mark=square,
    ]
    coordinates {
( 9,0.003348251213)  
( 12,0.001795211678)  
( 18,0.0005750424435)  
( 27,0.0001196822107)  
( 39,1.667955328e-005)  
( 54,1.573397128e-006)  
}; 
\addplot[
    color=orange,
    mark=otimes,
    ]
    coordinates {
( 8,0.0596764)  
( 10,0.0448737)  
( 12,0.0327391)  
( 16,0.0231632)  
( 24,0.0157318)  
( 32,0.0102829)  
( 57,0.00642519)  
};    
\addplot[
    color=green,
    mark=triangle,
    ]
    coordinates {
( 8,0.0001074290863)  
( 12,8.831131807e-006)  
( 18,3.352761752e-007)  
( 26,4.291846489e-009)  
( 38,1.441557984e-011)  
( 54,5.162537065e-015)  
};
\end{axis}
\end{tikzpicture}
\label{fig:sonarlambda1}
\end{subfigure}
\begin{subfigure}[b]{0.3\textwidth}
\centering
\begin{tikzpicture}[scale=.45]
\begin{axis}[
    title={},
    ymode=log,
    xlabel={Number of effective passes},
    ylabel={$g(\theta)-g(\theta^*)$},
    legend pos=south west,
    ymajorgrids=true,
    grid style=dashed,
]
 \addplot[
    color=magenta,
    mark=o,
    ]
    coordinates {
( 8,0.00912444)  
( 11,0.0062398)  
( 16,0.00412626)  
( 23,0.00292784)  
( 32,0.00217446)  
( 46,0.00139727)  
    };
 \addplot[
    color=red,
    mark=diamond,
    ]
    coordinates {
( 8,0.0119633)  
( 11,0.00807373)  
( 16,0.0061898)  
( 23,0.00409503)  
( 32,0.00292157)  
( 46,0.00195437)  
    };
 \addplot[
    color=darkgray,
    mark=star,
    ]
    coordinates {
( 8,0.00242052)  
( 11,0.000448204)  
( 16,3.44401e-005)  
( 23,1.29844e-006)  
( 32,5.52984e-009)  
( 46,6.49131e-012)  
    };
 \addplot[
    color=blue,
    mark=square,
    ]
    coordinates {
( 9,0.001026639239)  
( 15,0.0001161494782)  
( 21,1.872795412e-005)  
( 30,2.282978596e-006)  
( 45,2.040377371e-007)  
}; 
\addplot[
    color=orange,
    mark=otimes,
    ]
    coordinates {
( 8,0.0137559)  
( 11,0.00823149)  
( 16,0.00457647)  
( 24,0.00240943)  
( 31,0.00122763)  
( 44,0.000616237) 
};    
\addplot[
    color=green,
    mark=triangle,
    ]
    coordinates {
( 8,4.761162406e-005)  
( 10,5.35927448e-006)  
( 16,5.607507664e-008)  
( 22,1.605952593e-009)  
( 32,1.655309223e-011)  
( 46,2.797762022e-014)  
};
\end{axis}
\end{tikzpicture}
\label{fig:madelonLambda1}
\end{subfigure}
\begin{subfigure}[b]{0.3\textwidth}
\centering
\begin{tikzpicture}[scale=.45]
\begin{axis}[
    title={},
    ymode=log,
    xlabel={Number of effective passes},
    ylabel={$g(\theta)-g(\theta^*)$},
    legend pos=south west,
    ymajorgrids=true,
    grid style=dashed,
]
 \addplot[
    color=magenta,
    mark=o,
    ]
    coordinates {
( 7,0.0996867)  
( 10,0.0663855)  
( 14,0.0402041)  
( 20,0.0223882)  
( 29,0.0117356)  
( 41,0.00596797)  
    };
 \addplot[
    color=red,
    mark=diamond,
    ]
    coordinates {
( 7,0.00321603)  
( 10,0.00209563)  
( 14,0.00140854)  
( 20,0.000940404)  
( 29,0.000599859)  
( 41,0.00040214)  
    };
 \addplot[
    color=darkgray,
    mark=star,
    ]
    coordinates {
( 7,0.00257491)  
( 10,0.000657259)  
( 14,8.00392e-005)  
( 20,4.20596e-006)  
( 29,6.51544e-008)  
( 41,1.92827e-010)  
    };
 \addplot[
    color=blue,
    mark=square,
    ]
    coordinates {
( 6,0.01999265019)  
( 9,0.005127187258)  
( 12,0.001401948167)  
( 18,0.0001518327776)  
( 27,1.634989216e-005)  
( 39,1.818571481e-006)  
}; 
\addplot[
    color=orange,
    mark=otimes,
    ]
    coordinates {
( 9,0.176599)  
( 12,0.140204)  
( 18,0.103189)  
( 27,0.0694089)  
( 40,0.0423453)  
};    
\addplot[
    color=green,
    mark=triangle,
    ]
    coordinates {
( 7,0.0002393877747)  
( 10,3.592079105e-005)  
( 14,2.1481843e-006)  
( 20,6.166071698e-008)  
( 28,6.608705527e-010)  
( 40,1.32938105e-012)  
};
\end{axis}
\end{tikzpicture}
\end{subfigure}
\comment{
\caption{Convergence on the sonar, madelon, and sido0 datasets with \(\lambda=\bar L/n\)}
\label{fig:sido0Lambda1}
\end{figure}

\begin{figure}
\centering
}

\begin{subfigure}[b]{0.3\textwidth}
\centering
\begin{tikzpicture}[scale=.45]
\begin{axis}[
    title={},
    ymode=log,
    xlabel={Number of effective passes},
    ylabel={$g(\theta)-g(\theta^*)$},
    legend pos=south west,
    ymajorgrids=true,
    grid style=dashed,
]
 \addplot[
    color=magenta,
    mark=o,
    ]
    coordinates {
( 13,0.0404091)  
( 19,0.0336552)  
( 27,0.0275294)  
( 39,0.02207)  
( 55,0.016672)  
( 78,0.0120018)  
( 111,0.00805407)  
( 157,0.00512939)  
    };
 \addplot[
    color=red,
    mark=diamond,
    ]
    coordinates {
( 13,0.00861098)  
( 19,0.00549535)  
( 27,0.00381943)  
( 39,0.00314718)  
( 55,0.00181371)  
( 78,0.00112818)  
( 111,0.00079532)  
( 157,0.000716989)  
};
 \addplot[
    color=darkgray,
    mark=star,
    ]
    coordinates {
 ( 13,0.00510667)  
( 19,0.00275855)  
( 27,0.00150442)  
( 39,0.000172737)  
( 55,3.3758e-005)  
( 78,7.96563e-006)  
( 111,7.44803e-009)  
( 157,3.37609e-011)  
   };
 \addplot[
    color=blue,
    mark=square,
    ]
    coordinates {
( 12,0.02502540758)  
( 18,0.01834541245)  
( 27,0.01244206862)  
( 39,0.007971931433)  
( 54,0.004881686401)  
( 78,0.002428969106)  
( 111,0.001029869503)  
( 156,0.0003541836451)  
}; 
\addplot[
    color=orange,
    mark=otimes,
    ]
    coordinates {
( 12,0.0740708)  
( 16,0.0617902)  
( 24,0.0512965)  
( 32,0.0425004)  
( 57,0.0349779)  
( 81,0.0283381)  
( 115,0.0223627)  
( 154,0.017002)  
};    
\addplot[
    color=green,
    mark=triangle,
    ]
    coordinates {
( 13,0.003187335863)  
( 17,0.00151644883)  
( 23,0.0006153974132)  
( 31,0.0002186736414)  
( 43,5.679710402e-005)  
( 60,8.276380106e-006)  
( 85,4.537086229e-007)  
( 122,7.404721031e-009)  
( 172,4.331721093e-011)  
};
\end{axis}
\end{tikzpicture}
\label{fig:sonarlambda0.1}
\end{subfigure}
\begin{subfigure}[b]{0.3\textwidth}
\centering
\begin{tikzpicture}[scale=.45]
\begin{axis}[
    title={},
    ymode=log,
    xlabel={Number of effective passes},
    ylabel={$g(\theta)-g(\theta^*)$},
    legend pos=south west,
    ymajorgrids=true,
    grid style=dashed,
]
 \addplot[
    color=magenta,
    mark=o,
    ]
    coordinates {
( 11,0.00932233)  
( 16,0.00627629)  
( 23,0.00453706)  
( 32,0.00338732)  
( 46,0.00226462)  
( 65,0.00148474)  
( 92,0.00099505)  
( 131,0.00064056)  
( 185,0.000476314)  
    };
 \addplot[
    color=red,
    mark=diamond,
    ]
    coordinates {
( 11,0.0122424)  
( 16,0.00942498)  
( 23,0.00629877)  
( 32,0.00461169)  
( 46,0.00317798)  
( 65,0.00214634)  
( 92,0.00147509)  
( 131,0.0011313)  
( 185,0.000737162)  
};
 \addplot[
    color=darkgray,
    mark=star,
    ]
    coordinates {
( 11,0.00499746)  
( 16,0.00306265)  
( 23,0.00183213)  
( 32,0.000974988)  
( 46,0.00050881)  
( 65,0.000257932)  
( 92,8.68904e-005)  
( 131,6.02113e-006)  
( 185,1.82143e-007)  
   };
 \addplot[
    color=blue,
    mark=square,
    ]
    coordinates {
( 9,0.004793157749)  
( 15,0.001520391225)  
( 21,0.0007797593104)  
( 30,0.0004905931347)  
( 45,0.0003471043954)  
( 63,0.0002512854024)  
( 90,0.0001564418065)  
( 129,7.897158099e-005)  
( 183,3.066502554e-005)  
}; 
\addplot[
    color=orange,
    mark=otimes,
    ]
    coordinates {
( 11,0.017044)  
( 16,0.0106427)  
( 24,0.00631605)  
( 31,0.00363966)  
( 44,0.00209828)  
( 60,0.00124252)  
( 88,0.000761036)  
( 119,0.000473574)  
( 171,0.000290071)  
};    
\addplot[
    color=green,
    mark=triangle,
    ]
    coordinates {
( 12,0.0003069120081)  
( 15,0.0001892667997)  
( 20,0.0001022443154)  
( 27,4.672786875e-005)  
( 36,1.591697662e-005)  
( 50,4.062080863e-006)  
( 71,4.017717182e-007)  
( 101,1.678073569e-008)  
( 144,1.940089756e-010)  
( 203,3.862465903e-013)  
};
\end{axis}
\end{tikzpicture}
\label{fig:madelonlambda0.1}
\end{subfigure}
\begin{subfigure}[b]{0.3\textwidth}
\centering
\begin{tikzpicture}[scale=.45]
\begin{axis}[
    title={},
    ymode=log,
    xlabel={Number of effective passes},
    ylabel={$g(\theta)-g(\theta^*)$},
    legend pos=south west,
    ymajorgrids=true,
    grid style=dashed,
]
 \addplot[
    color=magenta,
    mark=o,
    ]
    coordinates {
( 10,0.1279)  
( 14,0.0847125)  
( 20,0.0514778)  
( 29,0.0291881)  
( 41,0.0159421)  
( 58,0.00864008)  
( 82,0.00468372)  
( 116,0.00252544)  
    };
 \addplot[
    color=red,
    mark=diamond,
    ]
    coordinates {
( 10,0.00248747)  
( 14,0.00155365)  
( 20,0.000975)  
( 29,0.000597016)  
( 41,0.000389532)  
( 58,0.000255695)  
( 82,0.000166632)  
( 116,0.000106088)  
};
 \addplot[
    color=darkgray,
    mark=star,
    ]
    coordinates {
 ( 10,0.00757938)  
( 14,0.00480397)  
( 20,0.00286306)  
( 29,0.00158569)  
( 41,0.000868751)  
( 58,0.000444918)  
( 82,0.000226335)  
( 116,9.09443e-005)  
  };
 \addplot[
    color=blue,
    mark=square,
    ]
    coordinates {
( 12,0.009359758946)  
( 18,0.002893060283)  
( 27,0.001316588956)  
( 39,0.0007300019463)  
( 57,0.0003645470935)  
( 81,0.0001697840999)  
( 114,6.996795841e-005)  
}; 
\addplot[
    color=orange,
    mark=otimes,
    ]
    coordinates {
( 12,0.232123)  
( 18,0.183153)  
( 27,0.134049)  
( 40,0.089985)  
( 55,0.0553122)  
( 82,0.0315897)  
( 123,0.0172637)  
};    
\addplot[
    color=green,
    mark=triangle,
    ]
    coordinates {
( 11,0.0009005821751)  
( 14,0.0005117661585)  
( 18,0.0002575314858)  
( 24,0.0001129713621)  
( 33,4.273391848e-005)  
( 45,1.346452514e-005)  
( 63,1.558959327e-006)  
( 90,6.0812906e-008)  
( 127,1.48928346e-009)  
};
\end{axis}
\end{tikzpicture}
\end{subfigure}
\comment{
\caption{Convergence on the sonar, madelon and sido0 datasets with \(\lambda=0.1\bar L/n\)}
\label{fig:sido0lambda0.1}
\end{figure}

\begin{figure}
\centering
}
\begin{subfigure}[b]{0.3\textwidth}
\centering
\begin{tikzpicture}[scale=.45]
\begin{axis}[
    title={},
    ymode=log,
    xlabel={Number of effective passes},
    ylabel={$g(\theta)-g(\theta^*)$},
    legend style={at={(0.5,-0.25)},anchor=north},
    ymajorgrids=true,
    grid style=dashed,
]
 \addplot[
    color=magenta,
    mark=o,
    ]
    coordinates {
( 9,0.0714197)  
( 13,0.0624177)  
( 19,0.0550647)  
( 27,0.0481724)  
( 39,0.0417196)  
( 55,0.0349955)  
( 78,0.0285829)  
( 111,0.0224899)  
( 157,0.017133)  
    };
  \addplot[
    color=red,
    mark=diamond,
    ]
    coordinates {
( 9,0.0268667)  
( 13,0.0178546)  
( 19,0.0123828)  
( 27,0.00886466)  
( 39,0.00638744)  
( 55,0.0035816)  
( 78,0.00220501)  
( 111,0.00132075)  
( 157,0.000960793)  
};
 \addplot[
    color=darkgray,
    mark=star,
    ]
    coordinates {
( 9,0.021509)  
( 13,0.0151268)  
( 19,0.010006)  
( 27,0.00646353)  
( 39,0.00129085)  
( 55,0.000466948)  
( 78,0.000376045)  
( 111,4.43786e-006)  
( 157,6.78413e-008)  
   };
 \addplot[
    color=blue,
    mark=square,
    ]
    coordinates {
( 9,0.05149973497)  
( 12,0.04577362632)  
( 18,0.03782358465)  
( 27,0.03014221555)  
( 39,0.02355171212)  
( 54,0.01820205287)  
( 78,0.01288089263)  
( 111,0.008645973558)  
( 156,0.005424923038)  
}; 
\addplot[
    color=orange,
    mark=otimes,
    ]
    coordinates {
( 10,0.11224)  
( 12,0.0974441)  
( 16,0.0848432)  
( 24,0.0739623)  
( 32,0.0647019)  
( 57,0.0566015)  
( 81,0.04921)  
( 115,0.0422366)  
( 154,0.0355605)  
};    
\addplot[
    color=green,
    mark=triangle,
    ]
    coordinates {
( 10,0.01992531389)  
( 13,0.01433603052)  
( 17,0.009942603011)  
( 23,0.006433602116)  
( 31,0.00394866649)  
( 43,0.002127585079)  
( 59,0.001007008911)  
( 82,0.0004342880649)  
( 115,0.0001537686779)  
( 161,3.896356756e-005)  
};
\legend{SGD, N.U. SGD,N.U. SAG,N.U. SVRG, LSVRG, Q-SVRG}
\end{axis}
\end{tikzpicture}
\label{fig:sonarlambda0.01}
\end{subfigure}
\begin{subfigure}[b]{0.3\textwidth}
\centering
\begin{tikzpicture}[scale=.45]
\begin{axis}[
    title={},
    ymode=log,
    xlabel={Number of effective passes},
    ylabel={$g(\theta)-g(\theta^*)$},
    legend style={at={(0.5,-0.25)},anchor=north},
    ymajorgrids=true,
    grid style=dashed,
]
 \addplot[
    color=magenta,
    mark=o,
    ]
    coordinates {
( 11,0.0114427)  
( 16,0.00821129)  
( 23,0.0063434)  
( 32,0.00507307)  
( 46,0.0038109)  
( 65,0.00285997)  
( 92,0.00217214)  
( 131,0.00157912)  
( 185,0.00117078)  
    };
 \addplot[
    color=red,
    mark=diamond,
    ]
    coordinates {
( 11,0.0142235)  
( 16,0.011085)  
( 23,0.00756822)  
( 32,0.00556678)  
( 46,0.00382469)  
( 65,0.00256304)  
( 92,0.00174068)  
( 131,0.00132024)  
( 185,0.000876803)  
};
 \addplot[
    color=darkgray,
    mark=star,
    ]
    coordinates {
( 11,0.00803928)  
( 16,0.00574138)  
( 23,0.00409661)  
( 32,0.00261433)  
( 46,0.00160342)  
( 65,0.000973203)  
( 92,0.000529954)  
( 131,0.000282931)  
( 185,0.00014678)  
   };
 \addplot[
    color=blue,
    mark=square,
    ]
    coordinates {
( 9,0.007138087332)  
( 15,0.003406958812)  
( 21,0.002497053822)  
( 30,0.002109702905)  
( 45,0.001887309449)  
( 63,0.001704781475)  
( 90,0.001468925795)  
( 129,0.001185231066)  
( 183,0.000881036949)  
}; 
\addplot[
    color=orange,
    mark=otimes,
    ]
    coordinates {
( 11,0.020105)  
( 16,0.0132992)  
( 24,0.00861911)  
( 31,0.00566747)  
( 44,0.00393096)  
( 60,0.00293873)  
( 88,0.00234646)  
( 119,0.00194187)  
( 171,0.00161095)  
};    
\addplot[
    color=green,
    mark=triangle,
    ]
    coordinates {
( 9,0.002037124911)  
( 12,0.001764226297)  
( 15,0.001507426128)  
( 20,0.001230884637)  
( 27,0.0009387161911)  
( 36,0.0006423576342)  
( 50,0.0003832722543)  
( 69,0.000185801979)  
( 96,7.279842128e-005)  
( 135,2.108737739e-005)  
( 189,4.56239749e-006)  
};
\legend{SGD, N.U. SGD,N.U. SAG,N.U. SVRG, LSVRG, Q-SVRG}
\end{axis}
\end{tikzpicture}
\label{fig:madelonlambda0.01}
\end{subfigure}
\begin{subfigure}[b]{0.3\textwidth}
\centering
\begin{tikzpicture}[scale=.45]
\begin{axis}[
    title={},
    ymode=log,
    xlabel={Number of effective passes},
    ylabel={$g(\theta)-g(\theta^*)$},
    legend style={at={(0.5,-0.25)},anchor=north},
    ymajorgrids=true,
    grid style=dashed,
]
 \addplot[
    color=magenta,
    mark=o,
    ]
    coordinates {
 ( 10,0.138913)  
( 14,0.0933848)  
( 20,0.0578378)  
( 29,0.0336378)  
( 41,0.0190468)  
( 58,0.0108812)  
( 82,0.006359)  
( 116,0.00379354)  
( 165,0.00229032)  
   };
 \addplot[
    color=red,
    mark=diamond,
    ]
    coordinates {
( 10,0.00356923)  
( 14,0.00238622)  
( 20,0.00159734)  
( 29,0.00104569)  
( 41,0.000699083)  
( 58,0.000459793)  
( 82,0.000295014)  
( 116,0.000183266)  
( 165,0.000108813)  
};
 \addplot[
    color=darkgray,
    mark=star,
    ]
    coordinates {
( 10,0.0110725)  
( 14,0.00790038)  
( 20,0.00539303)  
( 29,0.00347982)  
( 41,0.00235608)  
( 58,0.0014495)  
( 82,0.0010533)  
( 116,0.000818821)  
( 165,0.000487859)  
  };
 \addplot[
    color=blue,
    mark=square,
    ]
    coordinates {
( 9,0.02658663204)  
( 12,0.01261798448)  
( 18,0.004972072173)  
( 27,0.002942598191)  
( 39,0.002082758456)  
( 57,0.001444019259)  
( 81,0.001008060021)  
( 114,0.0006961510209)  
( 165,0.0004521957907)  
}; 
\addplot[
    color=orange,
    mark=otimes,
    ]
    coordinates {
( 9,0.292382)  
( 12,0.246732)  
( 18,0.196341)  
( 27,0.145304)  
( 40,0.098932)  
( 55,0.0619119)  
( 82,0.0361902)  
( 123,0.0204227)  
( 173,0.0115184)  
};    
\addplot[
    color=green,
    mark=triangle,
    ]
    coordinates {
( 11,0.002186247558)  
( 14,0.001559063629)  
( 18,0.001075399698)  
( 24,0.0007157473094)  
( 33,0.0004619081427)  
( 45,0.0002841979686)  
( 62,0.0001662979011)  
( 86,8.760390549e-005)  
( 120,4.265508207e-005)  
( 169,1.777417363e-005)  
};
\legend{SGD, N.U. SGD,N.U. SAG,N.U. SVRG, LSVRG, Q-SVRG}
\end{axis}
\end{tikzpicture}
\end{subfigure}
\caption{Convergence on the sonar, madelon and sido0 datasets, from left to right. The first, second and third row correspond respectively to \(\lambda=\bar L/n\), \(\lambda=0.1\bar L/n\) and \(\lambda=0.01\bar L/n\).}
\label{fig:sido0lambda0.01}
\end{figure}

\section*{Conclusion}\label{se:conclusion}
We have analysed a variant of SVRG to approximately minimize a class of quadratic functions.  Our method, Q-SVRG,  is applicable to  minimization problems  involving \(n\) points in dimension \(d\) that arise in several applications such as least-squares regressions, ridge regressions, linear discriminant analysis and regularized linear discriminant analysis. When the Hessian matrix is  positive definite, Q-SVRG yields a convergence rate of  \(O((\kappa /m)^{l})\) in \(O(l(n+m))\) stochastic gradients for  arbitrary \(l\ge1\), and can be simulated  without the knowledge of \(\mu\).  In addition, when \(\mu\) is known, Q-SVRG achieves a linear convergence rate and improves the previously known running time of SVRG   by up a logarithmic factor. Furthermore, when \(\kappa\) is a constant, our analysis improves the state-of-the-art running times of accelerated SVRG,  and is better than the matching lower bound of \citet{LanZhouAccel2018}, by a logarithmic factor. A limitation of Q-SVRG  is that it is applicable only to quadratic problems.

\section{Acknowledgements}
This work was achieved through the Laboratory
of Excellence on Financial Regulation (Labex ReFi) under the reference ANR-10-LABX-0095.
\appendix
\section{Proof of Theorem \ref{th:mainGen}}\label{se:proofMain}
We prove the theorem in the case \(l=1\). An inductive argument then implies that Theorem \ref{th:mainGen} holds for any \(l\geq1\).  We first show that it can be assumed without loss of generality that \(\theta_{0}=0\).  Let \(\hat \theta^*:=\theta^{*}-\theta_{0}\), \(\hat c:=H\hat \theta^*\),   \(\hat\theta_{k}:=\theta_{k}- \theta_{0}\) for \(k\geq0\), and\begin{displaymath}
\hat f(\theta):=\frac{1}{2}\theta^{T}H\theta-\hat c^{T}\theta,
\end{displaymath}    for any \(d\)-dimensional column vector \(\theta\).   Then \eqref{eq:Htheta*} holds for the triplet \((H,\hat \theta^*,\hat c)\), and the sequence   \((\hat\theta_{k}:k\geq0)\),  satisfies the recursion
\begin{equation}
\hat\theta_{k+1}=\hat\theta_{k}-\alpha(Q_{k}\hat\theta_{k}-\hat c),
\end{equation}  
which is of the  same type as \eqref{eq:BasicDefthetaGen}. For \(k\geq1\), let \begin{equation*}
\hat{\bar\theta}_{k}:=\frac{\hat\theta_{0}+\cdots+\hat\theta_{k-1}}{k}.
\end{equation*}
Applying~\eqref{eq:minimumf} to the triplet \((H,\hat \theta^*,\hat c)\) shows that, for any \(d\)-dimensional column vector \(\theta\),
 \begin{displaymath}
f(\theta)-f(\theta^{*})=\hat f(\theta-\theta_{0})-\hat f(\hat \theta^*).
\end{displaymath}   Consequently, as  \(\hat{\bar\theta}_{k}=\bar\theta_{k}- \theta_{0}\), \begin{displaymath}
f(\bar\theta_{k})-f(\theta^{*})=\hat f(\hat{\bar\theta}_{k})-\hat f(\hat \theta^*).
\end{displaymath}Thus, if  Theorem~\ref{th:mainGen} holds for the triplet \((H,\hat \theta^*,\hat c)\) and the sequence   \((\hat\theta_{k}:k\geq0)\),
which satisfies \(\hat\theta_{0}=0\), it also holds for the triplet \((H,\theta^*,c)\) and the sequence   \((\theta_{k}:k\geq0)\). The rest of the proof assumes that \(\theta_0=0\). We will also assume without loss of generality that \(L=1\). This assumption can be justified by a suitable scaling of \(\alpha\), \(H\) and \((Q_{k}:k\ge0)\).

For \(k\geq0\), define  the \(d\times d\) random matrix \(P_{k}:=I-\alpha Q_{k}\).   Thus \eqref{eq:BasicDefthetaGen} can be rewritten as   \begin{equation}\label{eq:deftheta}
\theta_{k+1}=P_{k}\theta_{k}+\alpha c.
\end{equation} 
  By A2, we have\begin{equation}
\label{eq:EPk}
E(P_{k})=I-\alpha H.
\end{equation} 
Define the sequence of  \(d\)-dimensional column vectors
\((\beta_k:k\ge0)\) recursively as follows. Let \(\beta_0 = \theta^{*}\) and, for \(k\ge0\), let \begin{equation}\label{eq:defbeta}
\beta_{k+1}=P_{k}\beta_{k}+\alpha c.
\end{equation}
Thus,  
\((\beta_k:k\ge0)\) satisfies the same recursion as   
\((\theta_k:k\ge0)\). It follows by induction from~\eqref{eq:deftheta} and~\eqref{eq:defbeta} that, for any \(k\ge0\), the vectors \(\theta_k\) and \(\beta_k\) are square-integrable. \subsection{Bounding the bias}
\begin{lemma}
For \(k\geq 0\), we have
\begin{equation}
\label{eq:Ethetak}E(\theta_{k})=(I-(I-\alpha H)^{k})\theta^{*}
\end{equation}
and
\begin{equation}
\label{eq:Ebetak}E(\beta_{k})=\theta^{*}.
\end{equation}
 
\end{lemma}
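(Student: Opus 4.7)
The plan is to prove both identities simultaneously by induction on $k$, using the fact that each $P_k$ is independent of the history $\sigma(P_0,\ldots,P_{k-1})$, which is exactly the $\sigma$-algebra that $\theta_k$ and $\beta_k$ are measurable with respect to. This independence allows us to factor expectations of products of the form $P_k \theta_k$ and $P_k \beta_k$.

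For the base case, $\theta_0 = 0$ gives $E(\theta_0) = 0 = (I - (I - \alpha H)^0)\theta^*$, and $\beta_0 = \theta^*$ gives $E(\beta_0) = \theta^*$ trivially. For the inductive step on \eqref{eq:Ebetak}, conditioning on the past and invoking independence together with \eqref{eq:EPk} yields
\begin{equation*}
E(\beta_{k+1}) = E(P_k)\,E(\beta_k) + \alpha c = (I - \alpha H)\theta^* + \alpha c,
\end{equation*}
and since $H\theta^* = c$ by \eqref{eq:Htheta*}, the right-hand side equals $\theta^* - \alpha c + \alpha c = \theta^*$, closing the induction.

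For \eqref{eq:Ethetak}, abbreviate $M := I - \alpha H$ and assume $E(\theta_k) = (I - M^k)\theta^*$. Then, exactly as above,
\begin{equation*}
E(\theta_{k+1}) = M\,E(\theta_k) + \alpha c = (M - M^{k+1})\theta^* + \alpha c.
\end{equation*}
Using $\alpha c = \alpha H\theta^* = (I - M)\theta^*$, the right-hand side collapses to $(I - M^{k+1})\theta^*$, which is the desired formula.

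The only nontrivial point is justifying the factorization $E(P_k \theta_k) = E(P_k)\,E(\theta_k)$ (and similarly for $\beta_k$); this is the step to spell out carefully, but it follows directly from Assumption A2's independence of the sequence $(Q_k)$ together with the fact that $\theta_k$ and $\beta_k$ are deterministic functions of $(Q_0,\ldots,Q_{k-1})$ (and the deterministic initial data), which was already noted to be square-integrable immediately before the lemma. No other obstacle is expected; the proof is a short two-paragraph induction.
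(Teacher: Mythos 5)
Your proof is correct and follows essentially the same route as the paper: induction on $k$, using the independence of $P_k$ from $\theta_k$ (and $\beta_k$) to factor $E(P_k\theta_k)=E(P_k)E(\theta_k)$, then simplifying with $H\theta^*=c$. The paper spells out only the $\theta_k$ case and notes that $\beta_k$ is analogous, whereas you write out both; the substance is identical.
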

\begin{proof}
By~\eqref{eq:deftheta}, \(\theta_{k}\) is  a deterministic function of \(P_0,\cdots,P_{k-1}\), and so  \(\theta_{k}\) is independent of \(P_{k}\). We prove~\eqref{eq:Ethetak} by induction on \(k\). Clearly, \eqref{eq:Ethetak} holds for \(k=0\). Assume that  \eqref{eq:Ethetak}  holds for \(k\). Thus, 
\begin{eqnarray*}
E(\theta_{k+1})
&=&E(P_{k})E(\theta_{k})+\alpha c\\
&=&(I-\alpha H)(I-(I-\alpha H)^{k})\theta^{*}+\alpha H\theta^{*}\\
&=&(I-(I-\alpha H)^{k+1})\theta^{*},
\end{eqnarray*}and so \eqref{eq:Ethetak}  holds for \(k+1\).    
A similar inductive proof implies~\eqref{eq:Ebetak}.\end{proof} 
\begin{lemma}\label{le:bias}
For \(k\geq1\), \begin{equation*}
E((\bar\theta_{k}-\theta^{*})^{T})\,H\, E(\bar\theta_{k}-\theta^{*})\leq\frac{||\theta^{*}||^{2}}{\alpha k}.
\end{equation*}\end{lemma}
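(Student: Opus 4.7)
The plan is to obtain an explicit closed form for $E(\bar\theta_k)-\theta^*$ using the previous lemma, and then reduce the bound to a scalar eigenvalue inequality. First I would use equation \eqref{eq:Ethetak} to write
\begin{equation*}
E(\theta_{j})-\theta^{*}=-(I-\alpha H)^{j}\theta^{*},
\end{equation*}
average over $j=0,\dots,k-1$, and use the closed-form geometric sum
\begin{equation*}
\sum_{j=0}^{k-1}(I-\alpha H)^{j}=(\alpha H)^{-1}\bigl(I-(I-\alpha H)^{k}\bigr),
\end{equation*}
which is valid because $H$ is invertible. This yields
\begin{equation*}
E(\bar\theta_{k})-\theta^{*}=-\frac{1}{\alpha k}H^{-1}\bigl(I-(I-\alpha H)^{k}\bigr)\theta^{*}.
\end{equation*}

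Next I would substitute this expression into the left-hand side of the claim. Writing $M:=I-\alpha H$, the quadratic form becomes
\begin{equation*}
\frac{1}{\alpha^{2}k^{2}}\,\theta^{*T}(I-M^{k})H^{-1}(I-M^{k})\theta^{*}.
\end{equation*}
Since (without loss of generality) $L=1$ and $\alpha\in(0,1]$, the matrices $H$ and $M=I-\alpha H$ commute, are symmetric, and satisfy $0\le M\le (1-\alpha\mu)I$ and $\mu I\le H\le I$. It therefore suffices to bound the corresponding function of the eigenvalues: for every eigenvalue $\lambda\in[\mu,1]$ of $H$,
\begin{equation*}
\frac{(1-(1-\alpha\lambda)^{k})^{2}}{\lambda}\le \alpha k.
\end{equation*}

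The main (and only non-routine) step is this scalar bound. I would prove it by combining two elementary inequalities valid for $x:=\alpha\lambda\in[0,1]$: the Bernoulli-type inequality $(1-x)^{k}\ge 1-kx$, which gives $1-(1-x)^{k}\le kx$, together with the trivial bound $1-(1-x)^{k}\le 1$. Multiplying these,
\begin{equation*}
(1-(1-\alpha\lambda)^{k})^{2}\le k\alpha\lambda\cdot 1,
\end{equation*}
so dividing by $\lambda$ yields the required $\alpha k$. Plugging this back gives
\begin{equation*}
(E(\bar\theta_{k})-\theta^{*})^{T}H(E(\bar\theta_{k})-\theta^{*})\le \frac{\alpha k}{\alpha^{2}k^{2}}\|\theta^{*}\|^{2}=\frac{\|\theta^{*}\|^{2}}{\alpha k},
\end{equation*}
which is the desired inequality. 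The general case $L\neq 1$ follows by the scaling reduction already invoked in the preamble of the proof.
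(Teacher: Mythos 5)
Your proof is correct, and its second half takes a genuinely different route from the paper's. Both arguments start from \eqref{eq:Ethetak}, i.e.\ $E(\theta_{j})-\theta^{*}=-(I-\alpha H)^{j}\theta^{*}$. The paper then applies Jensen's inequality (convexity of $x\mapsto x^{T}Hx$) to pass to the average of the individual quadratic forms ${\theta^{*}}^{T}(I-\alpha H)^{2i}H\theta^{*}$, enlarges the sum to $\sum_{i=0}^{2k-2}(I-\alpha H)^{i}H$, and collapses it with the telescoping identity \eqref{eq:identitySumH}; it never inverts $H$ and works entirely with positive-semidefinite orderings. You instead compute $E(\bar\theta_{k})-\theta^{*}$ exactly as $-\tfrac{1}{\alpha k}H^{-1}(I-(I-\alpha H)^{k})\theta^{*}$ via the geometric sum (legitimate, since A1 makes $H$ invertible), and then reduce the resulting quadratic form eigenvalue by eigenvalue to the scalar inequality $(1-(1-x)^{k})^{2}\le kx$ for $x=\alpha\lambda\in[0,1]$, which you prove by multiplying the Bernoulli bound $1-(1-x)^{k}\le kx$ with the trivial bound $1-(1-x)^{k}\le 1$ (valid since both majorize the same nonnegative quantity). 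Your route keeps the bias exact until the very last step and isolates all the slack in one clean scalar estimate, at the price of requiring $H^{-1}$ and a spectral reduction; the paper's route is coordinate-free and avoids inverting $H$, at the price of an early relaxation through Jensen. Both deliver the same constant $1/(\alpha k)$, and your appeal to the normalization $L=1$ is consistent with the reduction already made in the preamble of the appendix proof, where this lemma is stated.
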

\begin{proof}By~\eqref{eq:Ethetak}, for \(0\leq i\leq k-1\), 
\begin{equation*}E((\theta_{i}-\theta^{*})^{T})\,H\, E(\theta_{i}-\theta^{*})
={\theta^{*}}^{T}(I-\alpha H)^{2i}H\theta^{*}.\end{equation*}
As \(H\) is symmetric positive semidefinite, the quadratic function \(x\mapsto x^{T}Hx\) is convex over \(\mathbb{R}^{d}\), and so \begin{eqnarray*}E((\bar\theta_{k}-\theta^{*})^{T})\,H\, E(\bar\theta_{k}-\theta^{*})
&\le&\frac{1}{k}\sum^{k-1}_{i=0}E((\theta_{i}-\theta^{*})^{T})\,H\, E(\theta_{i}-\theta^{*})\\
&=&\frac{1}{k}\sum^{k-1}_{i=0}{\theta^{*}}^{T}(I-\alpha H)^{2i}H\theta^{*}\\
&\le&\frac{1}{k}\sum^{2k-2}_{i=0}{\theta^{*}}^{T}(I-\alpha H)^{i}H\theta^{*}\\
&=&\frac{1}{\alpha k}{\theta^{*}}^{T}(I-(I-\alpha H)^{2k-1})\theta^{*}.
\end{eqnarray*}
The third equation follows by observing that \((I-\alpha H)^{i}H\) is positive semidefinite since all eigenvalues of \(\alpha H\) are between \(0\) and \(1\).
The last equation follows from the identity
\begin{equation}\label{eq:identitySumH}
\sum^{j}_{i=0}(I-H')^{i}H'=I-(I-H')^{j+1}, \ j\ge0,
\end{equation} 
for any \(d\times d\) matrix \(H'\). As \((I-\alpha H)^{2k-1}\) is positive semidefinite, this completes the proof.\end{proof}
\subsection{Bounding the variance}      
\begin{lemma}
\label{le:VarianceBetaKPositiveDefinite}
For \(k\ge0\), we have \(E(||\beta_{k}-\theta^{*}||{^2})\le(\alpha/\mu){\theta^{*}}^{T}H\theta^{*}\).
\end{lemma}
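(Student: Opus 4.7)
The idea is that $(\beta_k)$ is essentially a plain stochastic gradient iterate started at the optimum: since $c = H\theta^*$, the recursion~\eqref{eq:defbeta} can be rewritten as $\beta_{k+1} = \beta_k - \alpha(Q_k\beta_k - c)$, and setting $v_k := \beta_k - \theta^*$ gives
\begin{equation*}
v_{k+1} = (I - \alpha Q_k) v_k - \alpha (Q_k - H)\theta^*, \qquad v_0 = 0.
\end{equation*}
The plan is to derive a one-step contraction for $s_k := E(\|v_k\|^2)$ of the form $s_{k+1} \le (1-\alpha\mu) s_k + \alpha^2 {\theta^*}^T H \theta^*$, and then telescope. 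As the proof of Theorem~\ref{th:mainGen} already reduces to the case $L = 1$, I will assume this throughout.

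First I would square the displayed identity and take the conditional expectation given $v_k$, using that $v_k$ depends only on $Q_0,\ldots,Q_{k-1}$ and is therefore independent of $Q_k$. Applying $E(Q_k)=H$ and $E(Q_k^2) \le H$ (since $L=1$), together with $\alpha \le 1$, yields the three pieces
\begin{align*}
E\bigl[v_k^T(I-\alpha Q_k)^2 v_k \bigm| v_k\bigr] &\le v_k^T\bigl(I - \alpha(2-\alpha)H\bigr) v_k \le v_k^T(I-\alpha H)v_k, \\
-2\alpha\, E\bigl[v_k^T (I-\alpha Q_k)(Q_k - H)\theta^* \bigm| v_k\bigr] &= -2\alpha^2\, v_k^T\bigl(H^2 - E(Q_k^2)\bigr)\theta^*, \\
\alpha^2\, E\bigl[{\theta^*}^T(Q_k-H)^2\theta^*\bigr] &= \alpha^2\, {\theta^*}^T\bigl(E(Q_k^2) - H^2\bigr)\theta^*.
\end{align*}
Now I take total expectation. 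The cross term becomes linear in $E(v_k)$, which vanishes by~\eqref{eq:Ebetak}. The residual term ${\theta^*}^T (E(Q_k^2) - H^2)\theta^*$ is bounded above by ${\theta^*}^T H \theta^*$ using $E(Q_k^2) \le H$ and $H^2 \ge 0$. Finally $v_k^T(I-\alpha H)v_k \le (1 - \alpha\mu)\|v_k\|^2$ by A1.

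Combining these bounds gives the advertised recursion $s_{k+1} \le (1-\alpha\mu)\, s_k + \alpha^2\, {\theta^*}^T H \theta^*$. Since $s_0 = 0$ and $\alpha\mu \in (0,1]$, iterating yields $s_k \le \alpha^2 {\theta^*}^T H \theta^* \sum_{j\ge0} (1-\alpha\mu)^j = (\alpha/\mu) {\theta^*}^T H \theta^*$, which is exactly the claim. The only mildly delicate point is keeping track of the three cross-terms after squaring and seeing that the one linear in $v_k$ is killed by~\eqref{eq:Ebetak}; everything else is a direct application of $E(Q_k^2)\le H$, $\alpha \le 1$, and $H \ge \mu I$.
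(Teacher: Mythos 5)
Your proposal is correct and follows essentially the same route as the paper's proof: the same one-step expansion of $\|\beta_{k}-\theta^{*}\|^{2}$ via the recursion $\beta_{k}-\theta^{*}=P_{k-1}(\beta_{k-1}-\theta^{*})+\alpha(H-Q_{k-1})\theta^{*}$, the same use of independence together with $E(\beta_{k})=\theta^{*}$ to kill the cross term, the same bounds $E(P_{k}^{2})\le(1-\alpha\mu)I$ and $E((H-Q_{k})^{2})\le H$, and the same geometric telescoping from $s_{0}=0$. No gaps.
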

\begin{proof}
By~\eqref{eq:Htheta*} and~\eqref{eq:defbeta}, we have \(\beta_{k}=P_{k-1}\beta_{k-1}+\alpha H\theta^{*}\). Hence\begin{equation}\label{eq:betakMinustheta*}
\beta_{k}-\theta^{*}=P_{k-1}(\beta_{k-1}-\theta^{*})+\alpha(H-Q_{k-1})\theta^{*}. \end{equation} Since   \(\beta_{k-1}\) and \(Q_{k-1}\) are independent, it follows from \eqref{eq:Ebetak} that \begin{equation*}E((\beta_{k-1}-\theta^{*})^{T}P_{k-1}(H-Q_{k-1})\theta^{*})=0.
\end{equation*}As \(E(||U+V||^{2})=E(||U||^{2})+E(||V||^{2})\) for any square-integrable random \(d\)-dimensional column vectors \(U\) and \(V\) with \(E(U^{T}V)=0\), \eqref{eq:betakMinustheta*} implies that\begin{equation}\label{betakDiffExpanded}
E(||\beta_{k}-\theta^{*}||^{2})=E((\beta_{k-1}-\theta^{*})^{T}{P_{k-1}}^{2}(\beta_{k-1}-\theta^{*}))+\alpha^{2}{\theta^{*}}^{T}{E((H-Q_{k-1})^{2})\theta^{*}}.
\end{equation}
On the other hand, \begin{eqnarray*}E({P_{j}}^{2})&=&E(I - 2\alpha Q_{j}+\alpha^{2}{Q_{j}}^{2})\\
&\le&I-2\alpha H+\alpha^{2}H\\
&\le&I-\alpha H\\
&\le&(1-\alpha\mu)I,
\end{eqnarray*} and \begin{eqnarray*}E((H-Q_{j})^{2})
&=&E(H^{2}-HQ_{j}-Q_{j}H+{Q_{j}}^{2})\\
&\le&E({Q_{j}}^{2})-H^{2}\\
&\leq&H.
\end{eqnarray*}
As \(P_{k-1}\) and \(\beta_{k-1}\) are independent, it follows from~\eqref{betakDiffExpanded} that \begin{equation}\label{betakDiffExpanded}
E(||\beta_{k}-\theta^{*}||^{2})\le (1-\alpha\mu)E(||\beta_{k-1}-\theta^{*}||^{2})+\alpha^{2}{\theta^{*}}^{T}{H\theta^{*}}.
\end{equation}
An induction on \(k\) completes the proof.
\end{proof}

For any square-integrable  \(d\)-dimensional random column vectors \(U\) and \(V\), let \begin{equation}\label{eq:covDefinition}
\cov(U,V):=E(U^{T}V)-E(U^{T})E(V),
\end{equation}and \(\var(U):=\cov(U,U)\).
For any square-integrable  \(d\)-dimensional random column vectors \(U\),  \(V\) and \(V'\), any deterministic symmetric  \(d\times d\) matrix \(A\), and any bounded  \(d\times d\) random matrix  \(B\)  independent of \((U,V)\),  it can be shown that \(\cov(U,V+V')=\cov(U,V)+\cov(U,V')\), and   \(\cov(AU,V)=\cov(U,AV)\), with \(\cov(U,BV)=\cov(U,E(B)V)\) and \(\var(U+V)\le2(\var(U)+\var(V))\). Furthermore, if \(A\)    is positive semidefinite, then \(\cov(U,AU)\ge0\). 

For \(0\leq j\leq  k\), let  \(M_{j,k}=P_{{k-1}}P_{{k-2}}\cdots P_{{j}}\), with \(M_{k,k}=I\).    
\begin{lemma}
\label{le:covHthetathetakpj}For nonnegative integers \(k,j\), we have
\begin{equation*}
\cov(H\theta_{k},\theta_{k+j})=\cov(H\theta_{k},(I-\alpha H)^{j}\theta_{k}).
\end{equation*} 
\end{lemma}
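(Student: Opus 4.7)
My plan is to unfold the recursion \eqref{eq:deftheta} for $\theta_{k+j}$ starting from $\theta_k$, split the resulting expression into the ``homogeneous'' part (in $\theta_k$) and an affine part driven by $c$, and then exploit the independence structure of the $P_i$'s together with the bilinearity rules for $\cov$ listed just before the lemma.

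Concretely, iterating \eqref{eq:deftheta} from step $k$ gives
\begin{equation*}
\theta_{k+j}=M_{k,k+j}\theta_{k}+\alpha\sum_{i=0}^{j-1}M_{k+i+1,k+j}\,c,
\end{equation*}
where the $M_{\cdot,\cdot}$'s are products of matrices among $P_{k},\dots,P_{k+j-1}$, hence independent of $\theta_{k}$ (which is a function of $P_{0},\dots,P_{k-1}$). By bilinearity of $\cov$,
\begin{equation*}
\cov(H\theta_{k},\theta_{k+j})=\cov\bigl(H\theta_{k},M_{k,k+j}\theta_{k}\bigr)+\alpha\sum_{i=0}^{j-1}\cov\bigl(H\theta_{k},M_{k+i+1,k+j}c\bigr).
\end{equation*}

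For each term in the sum, $c$ is deterministic and $M_{k+i+1,k+j}$ is a bounded random matrix independent of $H\theta_{k}$; applying the rule $\cov(U,BV)=\cov(U,E(B)V)$ collapses it into a covariance with a deterministic vector, which vanishes. For the remaining term I apply the same rule with $U=H\theta_{k}$, $V=\theta_{k}$ and $B=M_{k,k+j}$, which is independent of $(U,V)$. Since the $P_{i}$'s are independent, $E(M_{k,k+j})=\prod_{i=0}^{j-1}E(P_{k+i})=(I-\alpha H)^{j}$ by \eqref{eq:EPk}, yielding the claimed identity.

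No real obstacle: the proof is essentially a bookkeeping argument checking (i) that the affine $c$-contribution is deterministic in the conditional sense and (ii) that the independence hypothesis is satisfied so that the covariance rule $\cov(U,BV)=\cov(U,E(B)V)$ applies. The only point requiring a little care is verifying the independence of $M_{k,k+j}$ from the pair $(H\theta_{k},\theta_{k})$, which follows at once from A2 together with the fact that $\theta_{k}$ is a deterministic function of $P_{0},\dots,P_{k-1}$.
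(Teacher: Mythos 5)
Your proposal is correct and follows essentially the same route as the paper: unfold the recursion to write $\theta_{k+j}=M_{k,k+j}\theta_{k}+\alpha\sum M_{i,k+j}c$, kill the $c$-terms, and apply $\cov(U,BV)=\cov(U,E(B)V)$ with $E(M_{k,k+j})=(I-\alpha H)^{j}$. The only cosmetic difference is that the paper establishes the unfolded formula by an explicit induction on $j$, whereas you state it directly.
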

\begin{proof}
We show by induction on \(j\) that, for \(j\ge0\), \begin{equation}
\label{eq:rec}\theta_{k+j}=M_{k,k+j}\theta_{k}+\alpha\sum^{k+j}_{i=k+1}M_{i,k+j}c.
\end{equation}Clearly, \eqref{eq:rec} holds for \(j=0\). Assume now that  \eqref{eq:rec} holds for \(j\). Then\begin{eqnarray*}
\theta_{k+j+1}&=&
P_{k+j}\theta_{k+j}+\alpha c\\
&=&P_{k+j}(M_{k,k+j}\theta_{k}+\alpha\sum^{k+j}_{i=k+1}M_{i,k+j}c)+\alpha c\\
&=&M_{k,k+j+1}\theta_{k}+\alpha\sum^{k+j}_{i=k+1}M_{i,k+j+1}c+c,
\end{eqnarray*}
and so   \eqref{eq:rec} holds for \(j\)+1. As \(\theta_{k}\) is independent of \(M_{i,k+j}\), for \(k\leq i\leq k+j\), it follows from  \eqref{eq:rec} that\begin{eqnarray*}
\cov(H\theta_{k},\theta_{k+j})&=&
\cov(H\theta_{k},M_{k,k+j}\theta_{k})\\
&=&\cov(H\theta_{k},E(M_{k,k+j})\theta_{k})\\
&=&\cov(H\theta_{k},(I-\alpha H)^{j}\theta_{k}).
\end{eqnarray*}The last equation follows from~\eqref{eq:EPk}.
\end{proof}
\begin{lemma}\label{le:covSum} For \(0\leq i\leq k\), we have
\begin{displaymath}
\sum^{k}_{j=i}\cov(H\theta_{i},\theta_{j})\le \frac{4}{\alpha\mu}{\theta^{*}}^{T}H\theta^{*}.
\end{displaymath}
\end{lemma}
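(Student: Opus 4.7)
The plan is to combine Lemma \ref{le:covHthetathetakpj} with a bound on $\var(\theta_i)$. First I would apply Lemma \ref{le:covHthetathetakpj} termwise and use linearity of $\cov$ in its second argument to pull the sum inside:
\begin{equation*}
\sum_{j=i}^{k}\cov(H\theta_i,\theta_j)=\sum_{j'=0}^{k-i}\cov(H\theta_i,(I-\alpha H)^{j'}\theta_i)=\cov(H\theta_i,A_i\theta_i),
\end{equation*}
where $A_i:=\sum_{j'=0}^{k-i}(I-\alpha H)^{j'}$. Applying the identity \eqref{eq:identitySumH} with $H'=\alpha H$ (which is invertible since $H\ge\mu I$) gives $A_i=\alpha^{-1}H^{-1}(I-(I-\alpha H)^{k-i+1})$, so $HA_i=\alpha^{-1}(I-(I-\alpha H)^{k-i+1})$. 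Since we may assume $L=1$ and $\alpha\le 1$, the eigenvalues of $\alpha H$ lie in $[0,1]$, so $HA_i$ is symmetric positive semidefinite with $HA_i\le\alpha^{-1}I$.

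Next, using the listed properties of $\cov$, I would rewrite $\cov(H\theta_i,A_i\theta_i)=\cov(\theta_i,HA_i\theta_i)$ and then note that $\alpha^{-1}I-HA_i$ is symmetric positive semidefinite, so $\cov(\theta_i,(\alpha^{-1}I-HA_i)\theta_i)\ge 0$. This yields
\begin{equation*}
\sum_{j=i}^{k}\cov(H\theta_i,\theta_j)\le \alpha^{-1}\var(\theta_i).
\end{equation*}
It thus suffices to show $\var(\theta_i)\le 4\mu^{-1}{\theta^*}^T H\theta^*$.

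To bound $\var(\theta_i)$, I would compare $\theta_i$ with the auxiliary sequence $\beta_i$ introduced before Lemma~\ref{le:bias}. Writing $\theta_i=\beta_i+(\theta_i-\beta_i)$ and using $\var(U+V)\le 2\var(U)+2\var(V)$, the first contribution is controlled by Lemma~\ref{le:VarianceBetaKPositiveDefinite}: since $E(\beta_i)=\theta^*$, we have $\var(\beta_i)=E(\|\beta_i-\theta^*\|^2)\le(\alpha/\mu){\theta^*}^T H\theta^*$. For the second contribution, I would observe that $(\theta_i-\beta_i)$ satisfies the homogeneous recursion $\theta_{k+1}-\beta_{k+1}=P_k(\theta_k-\beta_k)$ with initial value $-\theta^*$, so $\theta_i-\beta_i=-M_{0,i}\theta^*$. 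Then
\begin{equation*}
\var(\theta_i-\beta_i)\le E(\|M_{0,i}\theta^*\|^2)={\theta^*}^T E(M_{0,i}^T M_{0,i})\theta^*,
\end{equation*}
and an inductive argument that conditions on $P_0,\dots,P_{k-2}$ and applies the bound $E(P_k^2)\le(1-\alpha\mu)I$ (established in the proof of Lemma~\ref{le:VarianceBetaKPositiveDefinite}) gives $E(M_{0,i}^T M_{0,i})\le(1-\alpha\mu)^i I\le I$. Combined with $\|\theta^*\|^2\le \mu^{-1}{\theta^*}^T H\theta^*$ (from $H\ge\mu I$), we obtain $\var(\theta_i-\beta_i)\le\mu^{-1}{\theta^*}^T H\theta^*$. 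Adding the two pieces yields $\var(\theta_i)\le 2(\alpha+1)\mu^{-1}{\theta^*}^T H\theta^*\le 4\mu^{-1}{\theta^*}^T H\theta^*$, and combining with the first paragraph finishes the proof.

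The main obstacle is the bound on $\var(\theta_i)$: the iterates $\theta_i$ do not satisfy a homogeneous contraction around $\theta^*$, so one cannot mimic the clean recursion used for $\beta_i$. The key idea is to decompose $\theta_i$ into the auxiliary process $\beta_i$ (centered at $\theta^*$ with a direct variance bound) plus the residual $-M_{0,i}\theta^*$, whose squared norm contracts geometrically because of the uniform operator bound $E(P_k^2)\le(1-\alpha\mu)I$.
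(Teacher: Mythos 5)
Your proposal is correct and follows essentially the same route as the paper: apply Lemma~\ref{le:covHthetathetakpj} and the identity \eqref{eq:identitySumH} to reduce the sum to $\alpha^{-1}\var(\theta_i)$, then bound $\var(\theta_i)$ by splitting off $\beta_i$ and controlling $\theta_i-\beta_i$ via the contraction $E(P_j^2)\le(1-\alpha\mu)I$. The only difference is that you spell out the induction behind $E(\|\theta_i-\beta_i\|^2)\le\|\theta^*\|^2$ (via $\theta_i-\beta_i=-M_{0,i}\theta^*$), which the paper leaves as a one-line remark; aside from a harmless index slip ($P_0,\dots,P_{i-2}$ rather than $P_0,\dots,P_{k-2}$), everything checks out.
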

\begin{proof}
By Lemma~\ref{le:covHthetathetakpj},
\begin{eqnarray*}
\sum^{k}_{j=i}\cov(H\theta_{i},\theta_{j})
&=&\sum^{k}_{j=i}\cov(H\theta_{i},(I-\alpha H)^{j-i}\theta_{i})\\
&=&\sum^{k}_{j=i}\cov(\theta_{i},H(I-\alpha H)^{j-i}\theta_{i})\\
&=&\alpha^{-1}\cov(\theta_{i},(I-(I-\alpha H)^{k+1-i})\theta_{i})\\
&\le&\alpha^{-1}\var(\theta_{i}).
\end{eqnarray*}
The third equation follows from~\eqref{eq:identitySumH}, and the last one from  the  positive semidefiniteness of \((I-\alpha H)^{k+1-i}\). On the other hand,
\begin{eqnarray*}
\var(\theta_{i}) 
&\le& 2(\var(\beta_{i})+\var(\theta_{i}-\beta_{i})) \\
&\le& 2(E(||\beta_{i}-\theta^{*}||{^2})+||\theta^{*}||^{2}) \\
&\le& \frac{4}{\mu}{\theta^{*}}^{T}H\theta^{*}.
\end{eqnarray*}
The second equation follows from the inequality \(E(||\theta_{i}-\beta_{i}||^{2})\leq||\theta^{*}||^{2}\),  which can be shown by induction on \(i\). The last equation is a consequence of Lemma~\ref{le:VarianceBetaKPositiveDefinite} and the inequality \({\theta^{*}}^{T}H\theta^{*}\ge\mu||{\theta^{*}}||^{2}\). This concludes the proof.
\end{proof}
\subsection{Combining bias and variance terms} 
By~\eqref{eq:minimumf},
\begin{displaymath}
E(f(\bar\theta_{k}))-f(\theta^{*})=\frac{1}{2}\,E((\bar\theta_{k}-\theta^{*})^{T}\,H\,(\bar\theta_{k}-\theta^{*})).
\end{displaymath}
On the other hand, by~\eqref{eq:covDefinition}, 
\begin{eqnarray*}
E((\bar\theta_{k}-\theta^{*})^{T}\,H\,(\bar\theta_{k}-\theta^{*}))
&=&E((\bar\theta_{k}-\theta^{*})^{T})\,H\,E(\bar\theta_{k}-\theta^{*})+\cov(H(\bar\theta_{k}-\theta^{*}),\bar\theta_{k}-\theta^{*})\\
&=&E((\bar\theta_{k}-\theta^{*})^{T})\,H\,E(\bar\theta_{k}-\theta^{*})+\cov(H\bar\theta_{k},\bar\theta_{k}).
\end{eqnarray*}Because \(H\) is symmetric positive semidefinite,\begin{eqnarray*}
\cov(H\bar\theta_{k},\bar\theta_{k})&=&\frac{1}{k^{2}}(\sum^{k-1}_{i=0}\cov(H\theta_{i},\theta_{i})+2\sum^{k-1}_{i=0}\sum^{k-1}_{j=i+1}\cov(H\theta_{i},\theta_{j}))
\\&\le&\frac{2}{k^{2}}\sum^{k-1}_{i=0}\sum^{k-1}_{j=i}\cov(H\theta_{i},\theta_{j})
\\&\le&\frac{8}{\alpha\mu k}{\theta^{*}}^{T}H\theta^{*}.
\end{eqnarray*}The last equation follows from Lemma~\ref{le:covSum}. Together with Lemma~\ref{le:bias}, this implies that
\begin{eqnarray*}
E((\bar\theta_{k}-\theta^{*})^{T}\,H\,(\bar\theta_{k}-\theta^{*}))&\leq& \frac{||\theta^{*}||^{2}}{\alpha k}+\frac{8}{\alpha\mu k}{\theta^{*}}^{T}H\theta^{*}\\
&\le&\frac{9}{\alpha\mu k}{\theta^{*}}^{T}H\theta^{*},
\end{eqnarray*}
where the second equation follows from the inequality \({\theta^{*}}^{T}H\theta^{*}\ge\mu||{\theta^{*}}||^{2}\).
It follows that
\begin{eqnarray*}
E(f(\bar\theta_{k}))-f(\theta^{*})&\leq&\frac{9{\theta^{*}}^{T}H\theta^{*}}{2\alpha\mu k}\\
&=&\frac{9(f(0)-f(\theta^{*}))}{\alpha\mu k}.
\end{eqnarray*}
This concludes the proof.
\bibliography{poly}
\end{document}